\newcommand{\ds}{\displaystyle}
\newcommand{\mygets}{=}
\newcommand{\real} {\mathbb{R}}
\newcommand{\nreal}{\mathbb{R}_{+}}
\newcommand{\tp}{\mathsf{T}}							
\newcommand{\lam}{\lambda}
\newcommand{\eps}{\epsilon}
\newcommand{\phie}{\phi_{\epsilon}}
\renewcommand{\AA}{\mathbf{A}}
\newcommand{\BB}{\mathbf{B}}
\newcommand{\DD}{\mathbf{D}}
\newcommand{\GG}{\mathbf{G}}
\newcommand{\HH}{\mathbf{H}}
\newcommand{\II}{\mathbf{I}}
\newcommand{\QQ}{\mathbf{Q}}
\newcommand{\RR}{\mathbf{R}}
\renewcommand{\b}{\mathbf{b}}
\renewcommand{\d}{\mathbf{d}}
\newcommand{\f}{\mathbf{f}}
\newcommand{\h}{\mathbf{h}}
\newcommand{\p}{\mathbf{p}}
\newcommand{\q}{\mathbf{q}}
\renewcommand{\u}{\mathbf{u}}
\renewcommand{\v}{\mathbf{v}}
\newcommand{\w}{\mathbf{w}}
\newcommand{\x}{\mathbf{x}}
\newcommand{\y}{\mathbf{y}}
\newcommand{\z}{\mathbf{z}}
\renewcommand{\le}{ \leqslant }
\renewcommand{\ge}{ \geqslant }
\newcommand{\smooth}{\rho}
\newcommand{\LAM}{\mathbf{\Lambda}}
\newcommand{\figurescale}		{0.45}				
\newcommand{\figurescalesmall}	{0.37}				
\newcommand{\opt}{^{*}}
\providecommand{\norm}[1]{\lVert#1\rVert}
\providecommand{\abs}[1]{\vert #1 \vert}
\providecommand{\mtxLam}[1]{\big[ \mathbf{\Lambda}(#1) \big]}
\providecommand{\secref}[1]{Section~\ref{#1}}
\providecommand{\figref}[1]{Figure~\ref{#1}}
\providecommand{\eqnref}[1]{\eqref{#1}}
\DeclareMathOperator{\LPFTVD}	{ \text{LPF/TVD} }
\DeclareMathOperator{\name}		{ \text{ETEA} }
\newtheorem{proposition}{Proposition}
\title{Sparsity-based Correction of Exponential Artifacts%
\thanks{Preprint submitted to Signal Processing (Manuscript)}%
\thanks{Corresponding author: Yin Ding (yd372@nyu.edu)}%
}
\author{Yin Ding and Ivan W. Selesnick\\
New York University School of Engineering\\ 6 MetroTech Center, Brooklyn, NY 11201, USA\\
}
\date{}
\renewcommand\footnotemark{}
\begin{document}
\maketitle

\begin{abstract}
This paper describes an exponential transient excision algorithm (ETEA).
In biomedical time series analysis, e.g., \textit{in vivo} neural recording and electrocorticography (ECoG), 
some measurement artifacts take the form of piecewise exponential transients.
The proposed method is formulated as an unconstrained convex optimization problem, regularized by smoothed $\ell_1$-norm penalty function, 
which can be solved by majorization-minimization (MM) method.
With a slight modification of the regularizer, 
ETEA can also suppress more irregular piecewise smooth artifacts, especially, ocular artifacts (OA) in electroencephalography (EEG) data.
Examples of synthetic signal, EEG data, and ECoG data are presented to illustrate the proposed algorithms.
\end{abstract}


\section{Introduction}

This work is motivated by the problem of suppressing various types of artifacts in recordings of neural activity.
In a recent study \cite{artifacts_Islam_2014}, typical artifacts in \textit{in vivo} neural recordings are classified into four types 
(Type 0 to 3, see  Section 2.2 and Figure~3 in \cite{artifacts_Islam_2014}).
This classification covers many artifacts in the scope of human brain activity recordings, 
e.g., electroencephalography (EEG) and electrocorticography (ECoG).
In this paper, we  consider the suppression of Type 0 and Type 1 artifacts.
For the purpose of flexibility and generality, we redefine them in terms of morphological characteristics:
\begin{itemize}
	\item
		Type 0: a smooth protuberance that can be modeled as $\hat x(t) = t e^{-\alpha t}$, when $t \ge t_0$.
	\item 
		Type 1: an abrupt jump followed by an exponential decay that can be modeled as $\hat x(t) = e^{-\alpha t}$, when $t \ge t_0$.
\end{itemize}
\figref{fig:etea_Example_0_type} shows examples of the two types of artifacts.
We do not consider the other two types in this work
because our previous works have addressed efficient algorithms to remove such artifacts.
For instance, low-pass filtering/total variation denoising ($\LPFTVD$) \cite{Selesnick_2013_lpftvd} suppresses  Type 2 artifacts (\figref{fig:etea_Example_0_type}c),
and lowpass filtering/compound sparse denoising (LPF/CSD) \cite{Selesnick_tara_2014, Selesnick_2013_lpftvd} can remove sparse and blocky spikes 
(Type 3 shown in \figref{fig:etea_Example_0_type}d).
 
\begin{figure}[t]
\centering
	\includegraphics[scale = \figurescalesmall] {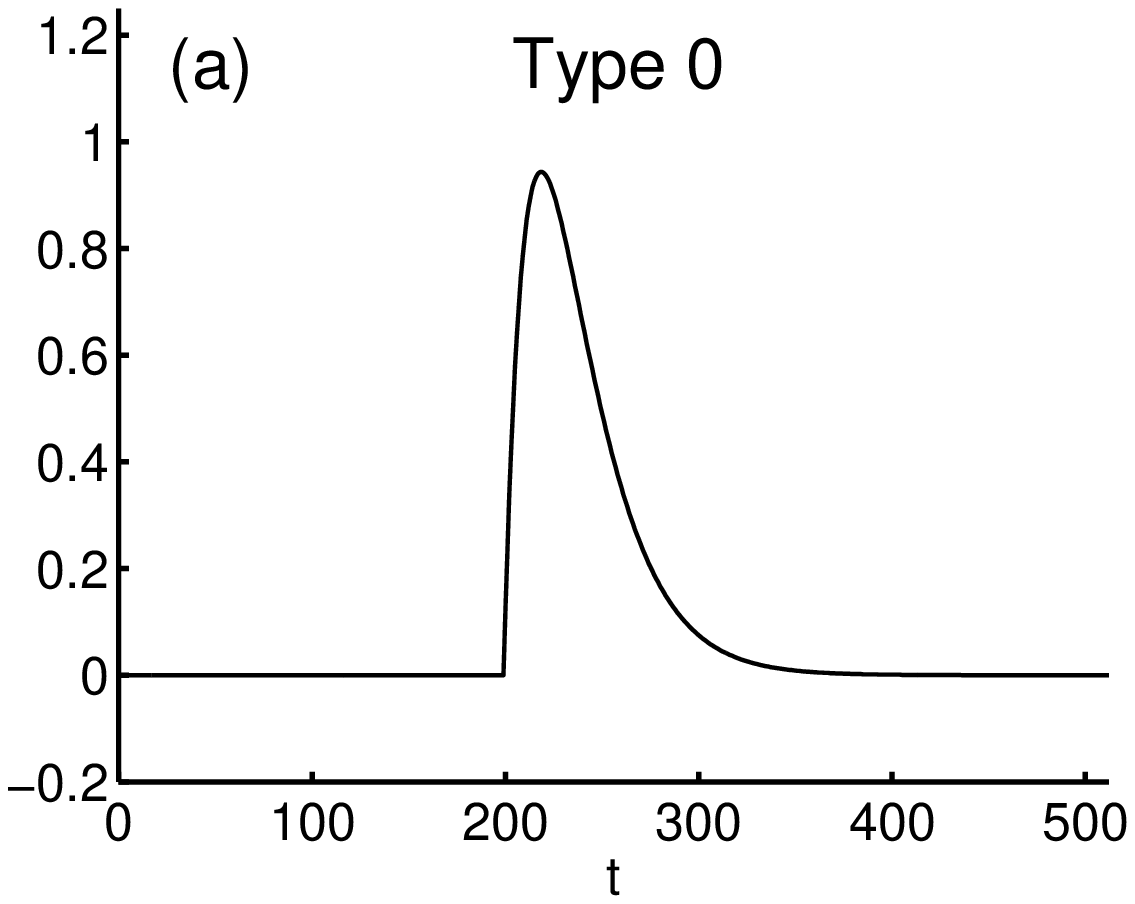}	
	\includegraphics[scale = \figurescalesmall] {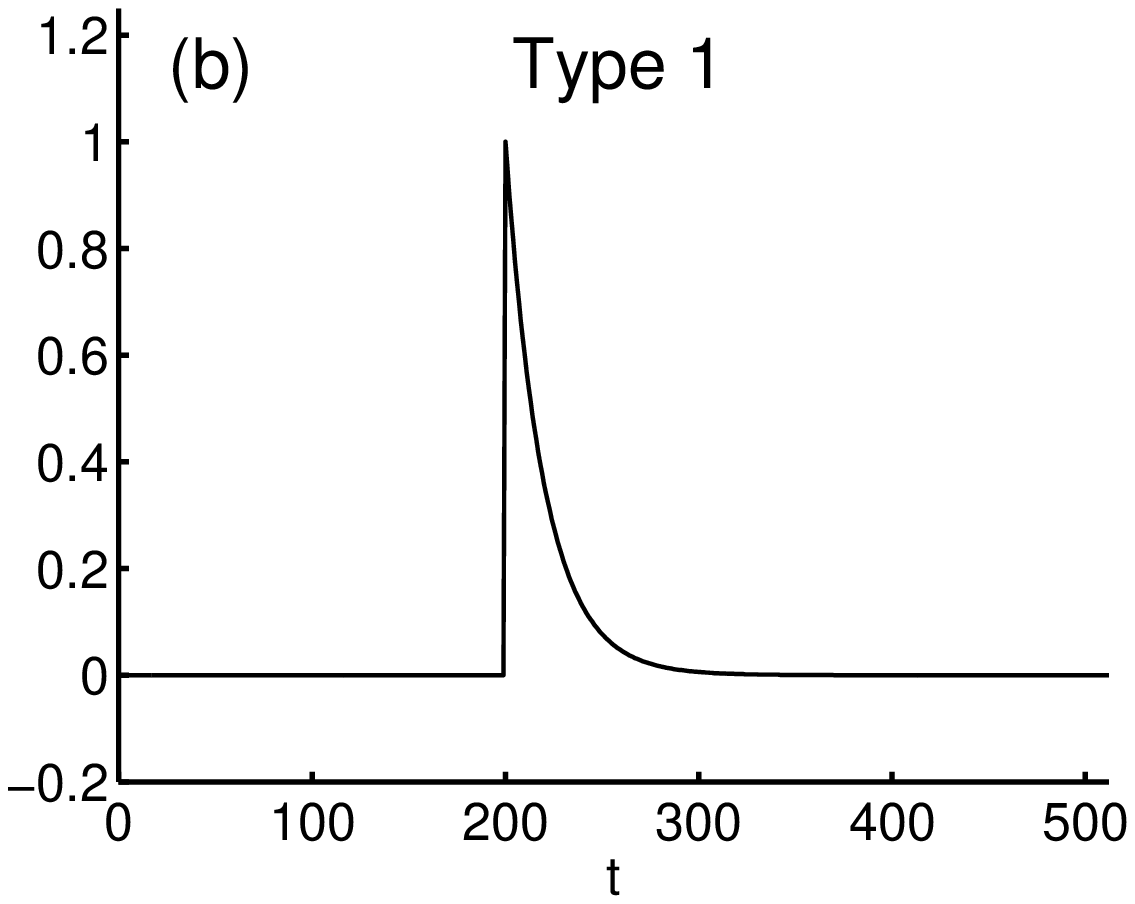}
	\\
	\includegraphics[scale = \figurescalesmall] {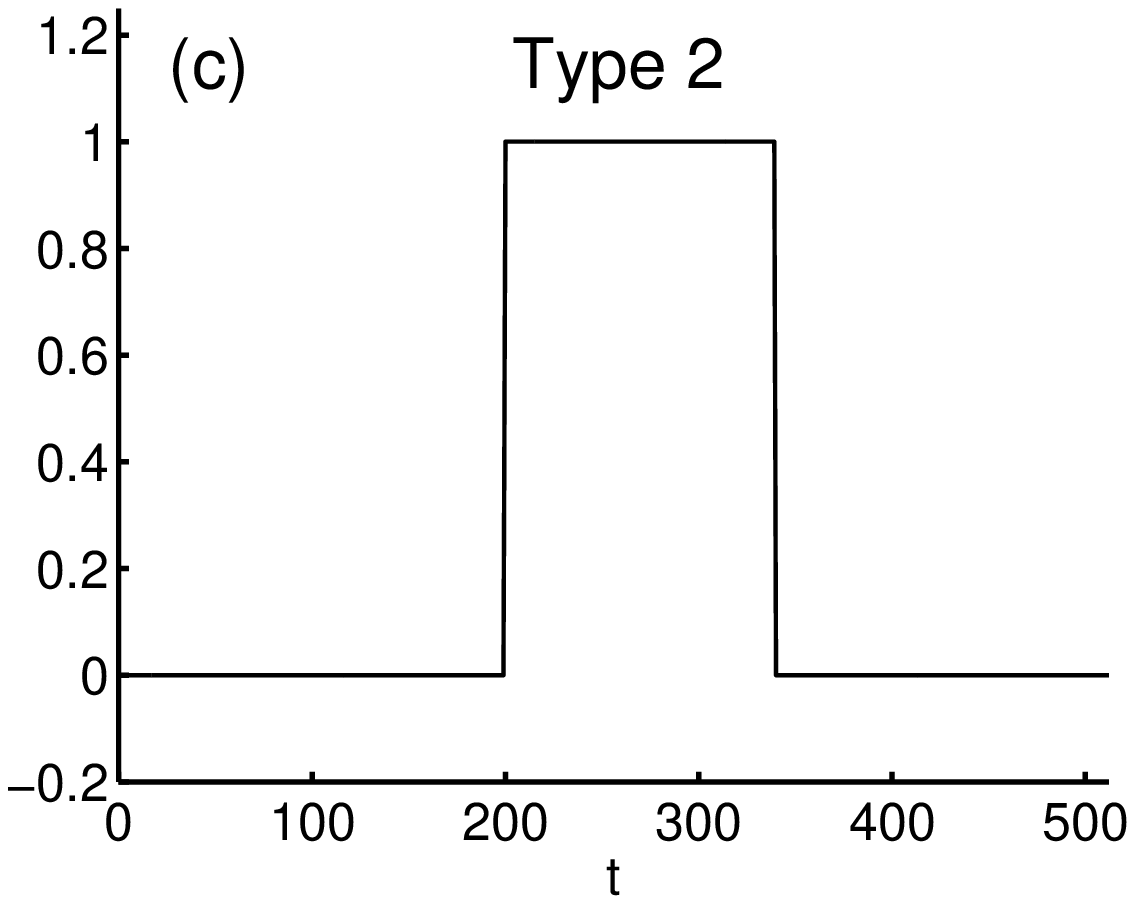}
	\includegraphics[scale = \figurescalesmall] {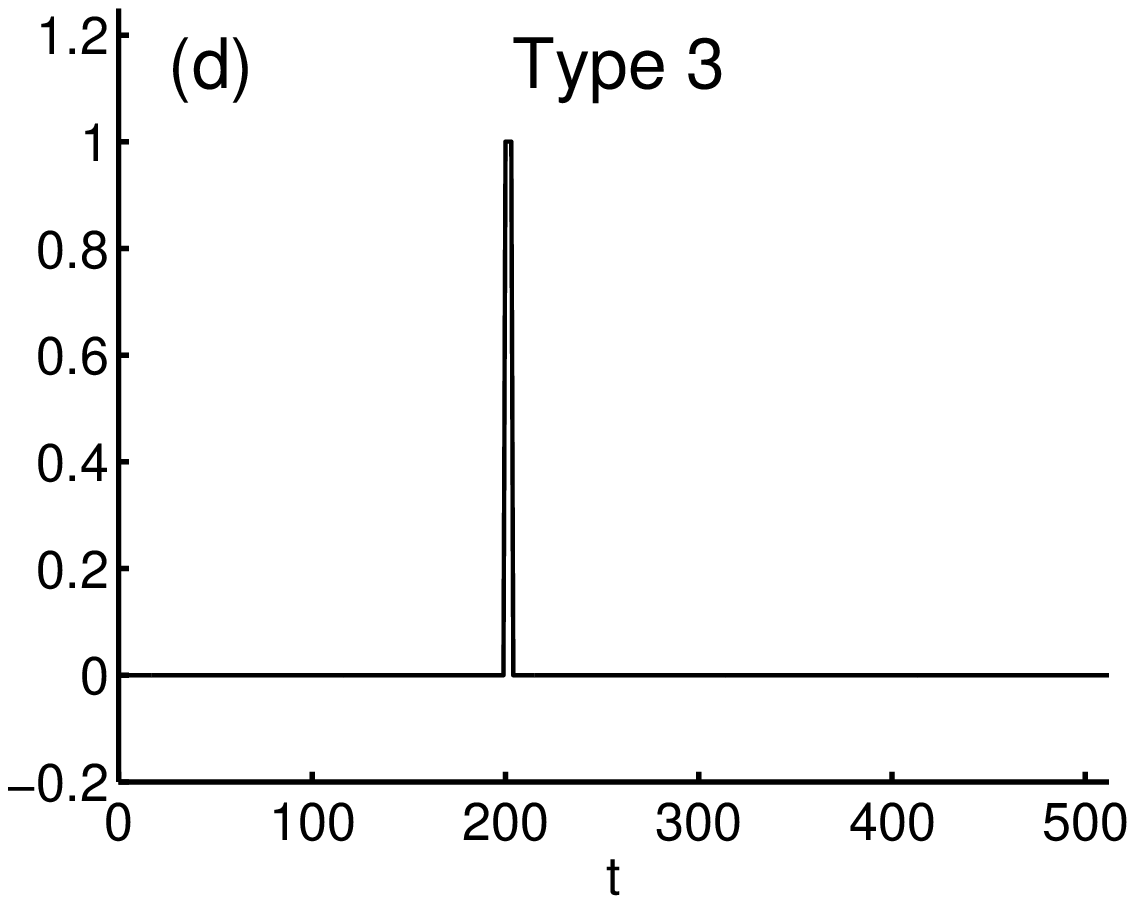}	
	\caption{Examples of
		(a) Type 0 artifact, and
		(b) Type 1 artifact, and
		(c) Type 2 artifact, and
		(d) Type 3 artifact.	}
	\label{fig:etea_Example_0_type}
\end{figure}

The approach proposed in this paper is based on an optimization problem intended to capture the primary morphological
characteristics of the artifacts using sparsity-inducing regularization.
To formulate the problem, we model the observed time series as
\begin{align}\label{eqn:etea_model}
	y(t) = f(t) + x(t) + w(t),	
\end{align}
where $f$ is a lowpass signal, $x$ is a piecewise smooth transient signal (i.e., Type 0 or Type 1 artifacts), and $w$ is stationary white Gaussian noise.
More specifically, $f$ is assumed to be restricted to a certain range of low frequencies.
In other words, $\HH(f) \approx \mathbf{0}$, where $\HH$ is a high-pass filter.
Note that in the signal model \eqnref{eqn:etea_model}, conventional LTI filtering is not suitable to estimate either $f$ or $x$ from $y$,
because component $x$, as a piecewise smooth signal comprised of transients, is not band limited.

In order to estimate the components, we combine LTI filtering with sparsity-based techniques.
We formulate an optimization problem for both decomposition and denoising.
A computationally efficient algorithm is derived to solve the optimization problem, based on the theory of majorization-minimization (MM) 
\cite{FBDN_2007_TIP, mm_Hunter_tutorial_2004, mm_Lange_2000}.

In addition, this paper specifies how to generate a smoothed penalty function and its majorizer from a non-smooth one, 
in order to overcome a numerical issue that arises when the penalty function is not differentiable.

\begin{table*}[t]
\begin{center}
	\caption{Sparsity-promoting penalty functions ($a>0$).}\medskip
	\scalebox{0.850}{
		\begin{tabular}{@{} c l l l l@{}}
		\toprule
		\textbf{Penalty}	 &$\phi(u)$ 	&$\phie(u)$  &$\psi(u) = u / \phie'(u)\quad $ \\[0.4em]
		\midrule 
			abs	 	& $\abs{u}$						
	 				& $\sqrt{ u^2 + \eps }$		
	 				& $\sqrt{ u^2 + \eps }$ \\[0.8em]	
			log	 	&$\ds\frac{1}{a} \log ( 1 + a |u| )$ 
	 				&$\ds\frac{1}{a} \log ( 1 + a \sqrt{u^2+\eps} )$ 
	 				&$\ds\sqrt{ u^2 + \eps } \left( 1 + a \sqrt{ u^2 + \eps } \right)$	\\[0.8em]
			atan 	&$\ds\frac{2}{a \sqrt{3}} \left( \tan^{-1} \left( \frac{1+2 a |u|} {\sqrt{3}} \right) - \frac{\pi}{6}\right)$
	 				&$\ds\frac{2}{a \sqrt{3}} \left( \tan^{-1} \left( \frac{1+2 a \sqrt{u^2+\eps} } {\sqrt{3}} \right) - \frac{\pi}{6} \right)$
	 				&$\ds\sqrt{ u^2 + \eps } \left( 1 + a \sqrt{ u^2 + \eps } + a^2 (u^2 + \eps) \right)$	\\[0.8em]
		\bottomrule
		\end{tabular}
	}
	\end{center}
	\label{tab:etea_penalty}
\end{table*}

\subsection{Related works}
Some recent works recover signals with transients by various algorithms.
In \cite{Gholami_2013_SP}, a slowly varying signal is modeled as a local polynomial and an optimization problem using Tikhonov regularization is formulated to capture it.
In \cite{Ning_QRS_2013}, the slowly varying trend is modeled as a higher-order sparse-derivative signal (e.g., the third-order derivative is sparse).

Instead of estimating the slowly varying component via regularization, the $\LPFTVD$ method \cite{Selesnick_2013_lpftvd} 
estimates a lowpass component by LTI filtering and a piecewise constant component by optimization.
In this case, an optimization problem is formulated to estimate the piecewise constant component.
The approach proposed here uses a similar technique to recover the lowpass component,
but in contrast to $\LPFTVD$, it is more general
--- the regularization is more flexible with a tunable parameter, so that $\LPFTVD$ can be considered as a special case.

Another algorithm related to the approach taken in this paper is the transient artifact reduction algorithm (TARA) \cite{Selesnick_tara_2014}
which is utilized to suppress additive piecewise constant artifacts and spikes (similar to a hybrid of Type~2 and Type~3 artifact).
The approaches proposed in this work target different types of artifacts (Type~0 and Type~1)
and applied in different applications.

The irregularity of Type 0 transients leads to a more complicated artifact removal problem, where the artifact are irregular fluctuations.
A typical example in EEG is ocular artifacts (OA) caused by the blink and/or movement of eyes.
To suppress OA, there are approaches based on empirical mode decomposition (EMD)
\cite{Molla_2012, eyeblink_Molla_picassp_2012, eyeblink_Molla_picassp_2010, Zeng_2013_Sensors},
and on independent component analysis (ICA) methods 
\cite{Akhtar_2012_SP, eyeblink_Dammers_tbme_2008, eyeblink_GuerreroMosquera_2012, eyeblink_Joyce_2004, Mammone_2012, eyeblink_Noureddin_tbme_2012}.
The concept of spatial-frequency in acoustic analysis is also used to remove OA from multichannel signals
\cite{eyeblink_Nazarpour_tbme_2008, eyeblink_Wongsawat_2008}.
In this work, we present a new method to suppress ocular artifacts by proposing a specific model and using sparse optimization.

This paper adopts a regularizer inspired by the generalized 1-D total variation \cite{tv_Karahanoglu_2011},
wherein the derivative operator in conventional total variation regularizer is generalized to a recursive filter.
The regularizers adopted in $\name$ and second-order $\name$
coincide with first-order and second-order cases of generalized 1-D total variation, respectively.
Some differences to the problem discussed in \cite{tv_Karahanoglu_2011} are as follow.
Firstly, the signal model \eqnref{eqn:etea_model} allows a lowpass baseline as a component,
hence, $\name$ can be seen as a combination of conventional LTI filtering and generalized 1-D total variation. 
Secondly, we consider a formulation in terms of banded matrices, for computational efficiency.
Thirdly, we give optimality conditions of the proposed problems, 
and use these conditions as a guide to set the regularization parameters.

\section{Preliminaries}

\subsection{Notation}
We use bold uppercase letters for matrices, e.g., $\AA$ and $\BB$, and bold lowercase letters for vectors, e.g., $\x$ and $\y$.
We use column vectors for one-dimensional series.
For example, a vector $\x \in \real^{N}$ is written as
\begin{align}
	\x = \big[ x(0)  , \ x(1)  , \ \cdots , \ x(N-1)  \big]^{\tp}
\end{align}
where $[ \ \cdot \ ]^{\tp}$ denotes matrix transpose.
The $\ell_1$-norm and squared $\ell_2$-norm of $\x$ are defined as
\begin{align}
	\ds \norm{\x}_1      : =  \sum_{n} \abs{x(n)},
	\qquad
	\ds \norm{\x}_2^{2}  : =  \sum_{n} {x^2(n)}.
\end{align}
The inverse transpose of a matrix is denoted as $\AA^{-\tp}$.
The first-order difference operator $\DD$ of size $(N-1) \times N$ is
\begin{equation} 
	\DD := 
	\begin{bmatrix}
		-1	&1 		&		& 		&		\\
	   	&  	-1		&1		& 		&  		\\
		&	& 		\ddots  &\ddots	&		\\
		&	&		&   	-1		&1
	\end{bmatrix}.
\label{eqn:etea_D}
\end{equation}

We use $f(x ; a)$ to denote a function of $x$ determined by parameter $a$, 
to distinguish it from a function with two ordered arguments, e.g., $f(x,y)$.

\subsection{LTI filter with matrix formulation} \label{sec:etea_lti_filter}

A discrete-time filter can be described as
\begin{align} \label{eqn:etea_ARMA}
	\sum_{k} a_k ~y(n-k) = \sum_{k} b_k ~x(n-k),	
\end{align}
with transfer function $H(z) = B(z) / A(z)$.
For a finite length signal, \eqnref{eqn:etea_ARMA} can be rewritten as
\begin{align} \label{eqn:etea_AyBx}
	\AA \y = \BB \x,	
\end{align}
where $\AA, \BB \in \real^{N \times N}$ are both banded matrices.
When $\HH$ is a zero-phase filter with $a_{-k} = a_{k}$ and $b_{-k} = b_{k}$, then $\AA$ and $\BB$ are both symmetric Toeplitz matrices
\begin{subequations} \label{eqn:AB}
\begin{align}	 \label{eqn:A}
	\AA  = 
	\begin{bmatrix}
		a_1		&a_0 		&			& 					\\
	   	a_0	    &a_1			&a_0 		&  				\\
		&		\ddots		&\ddots	&\ddots					\\
		&		&		a_0		&a_1		&a_0			\\
		&		& 		&		a_0			&a_1								
	\end{bmatrix},
	\\		
	\label{eqn:B}		
	\BB  = 
	\begin{bmatrix}
			b_1 		&b_0			& 						\\
			b_0		&b_1 		&b_0							\\
			&		\ddots	&\ddots	&\ddots						\\
			&		&		b_0		&b_1 		&b_0			\\
		 	&		&		&		b_0		&b_1 		
	\end{bmatrix}.			
\end{align}
\end{subequations}
In this case, disregarding a few samples on both edges, $\y$ can be re-written as:
\begin{align}\label{eqn:etea_banded_filter}
	\y  = \BB \AA^{-1} \x = \HH \x.	
\end{align}
A detailed description of such zero-phase filters is given in Section~V of Ref.~\cite{Selesnick_2013_lpftvd},
where two parameters, $d$ and $f_c$, determine the order of the filter and the cut-off frequency, respectively.
In this paper, $\BB$ is a square matrix, in contrast to~\cite{Selesnick_2013_lpftvd}. The simplest case ($d=1$) is given by~\eqnref{eqn:AB}.

\subsection{Majorization-Minimization}\label{sec:etea_majorization_minimization}

Majorization-Minimization (MM) \cite{FBDN_2007_TIP, opt_Lange_book_2004, mm_Schifano_2010}
is a procedure to replace a difficult optimization problem by a sequence of simpler ones
\cite{FBDN_2007_TIP, mm_Hunter_tutorial_2004, mm_Lange_2000, mm_Schifano_2010}.
Suppose a minimization problem has a objective function $J : \real^N  \to \real$,
then the majorizer $G( \u , \v ) : \mathbb{R}^N \times \mathbb{R}^N \to \mathbb{R}$ satisfies
\begin{subequations} \label{eqn:etea_mm_condition}
\begin{align} 
	G( \u , \v ) & =   J(\u), \quad \text{when} \ \u = 	 \v, 	  \label{eqn:etea_mm_condition_a}\\
	G( \u , \v ) & \ge J(\u), \quad \text{when} \ \u \neq \v.     \label{eqn:etea_mm_condition_b}
\end{align}
\end{subequations}
Then MM iteratively solves
\begin{align} \label{eqn:etea_mm_iteration}
	\u^{(k+1)} =  \arg \min_{\u} G(\u , \u^{(k)}) 	
\end{align}
until convergence, where $k$ is iteration index. 
The detailed derivation and proof of convergence of MM are given in \cite{opt_Lange_book_2004}.

\section{Majorization of smoothed penalty function}
\begin{figure}[t]
\centering
	\includegraphics[scale = \figurescalesmall] {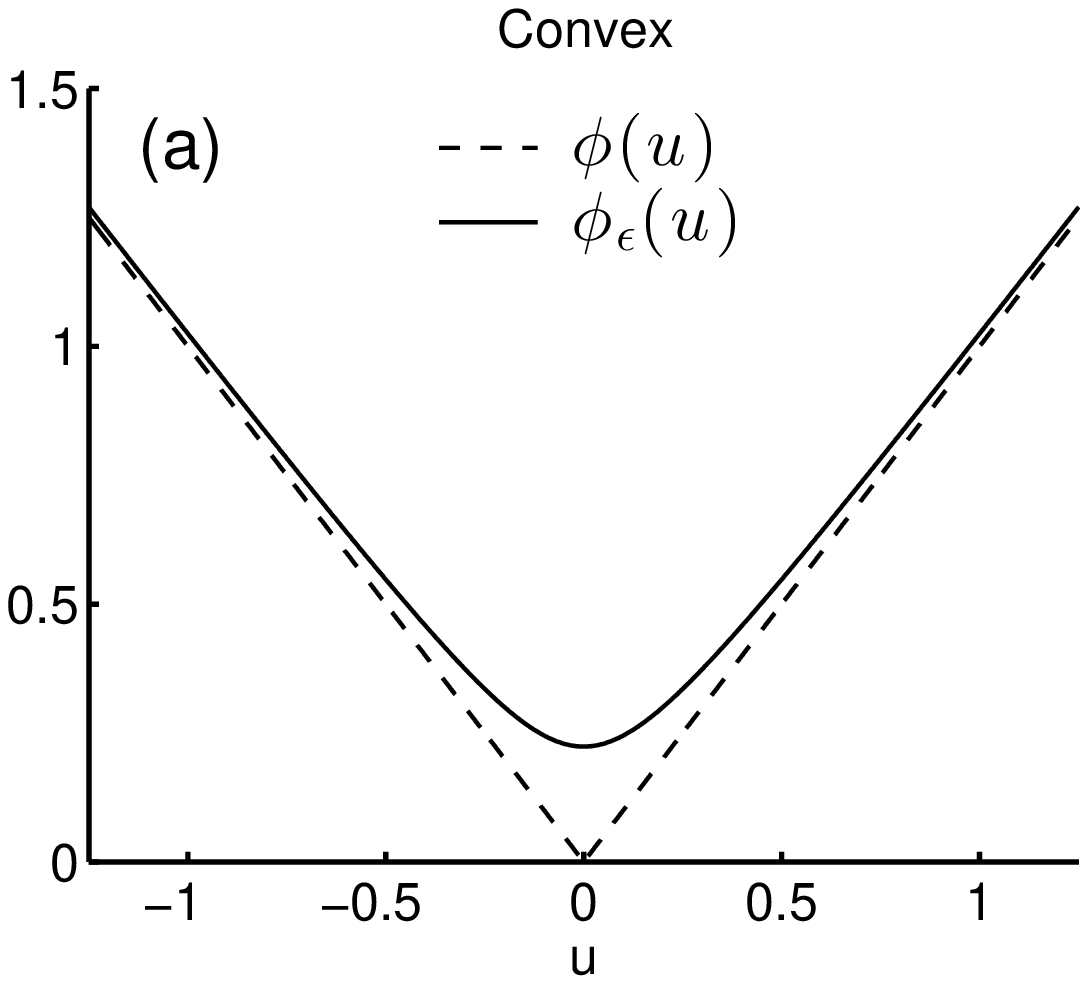}	
	\includegraphics[scale = \figurescalesmall] {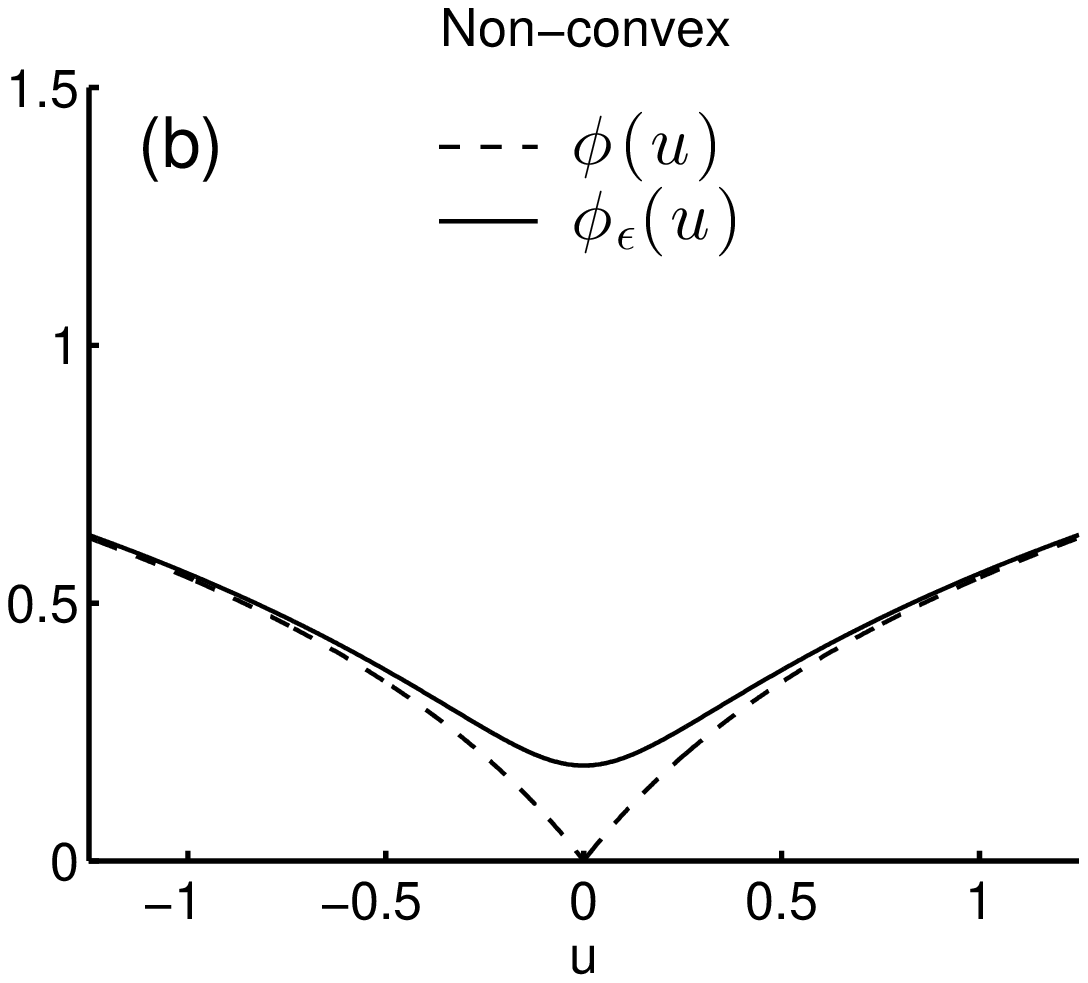}
	\\	
	\includegraphics[scale = \figurescalesmall] {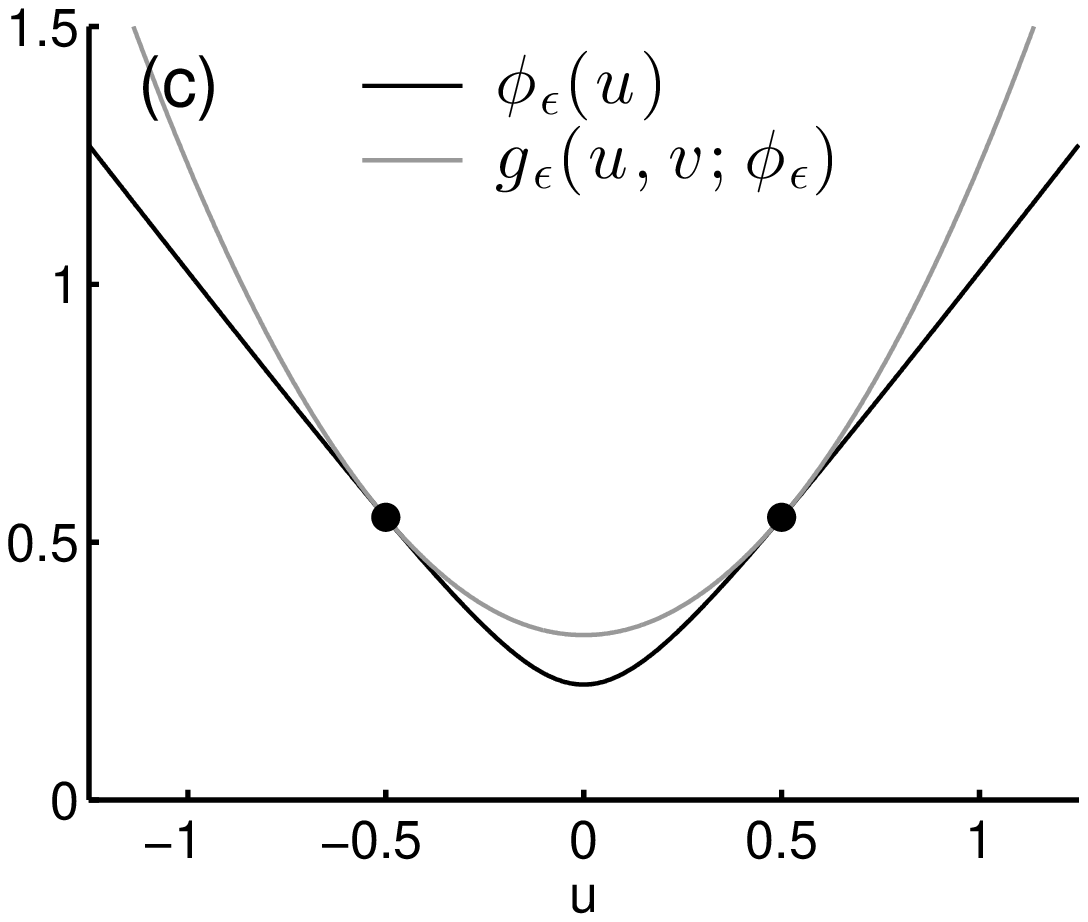}	
	\includegraphics[scale = \figurescalesmall] {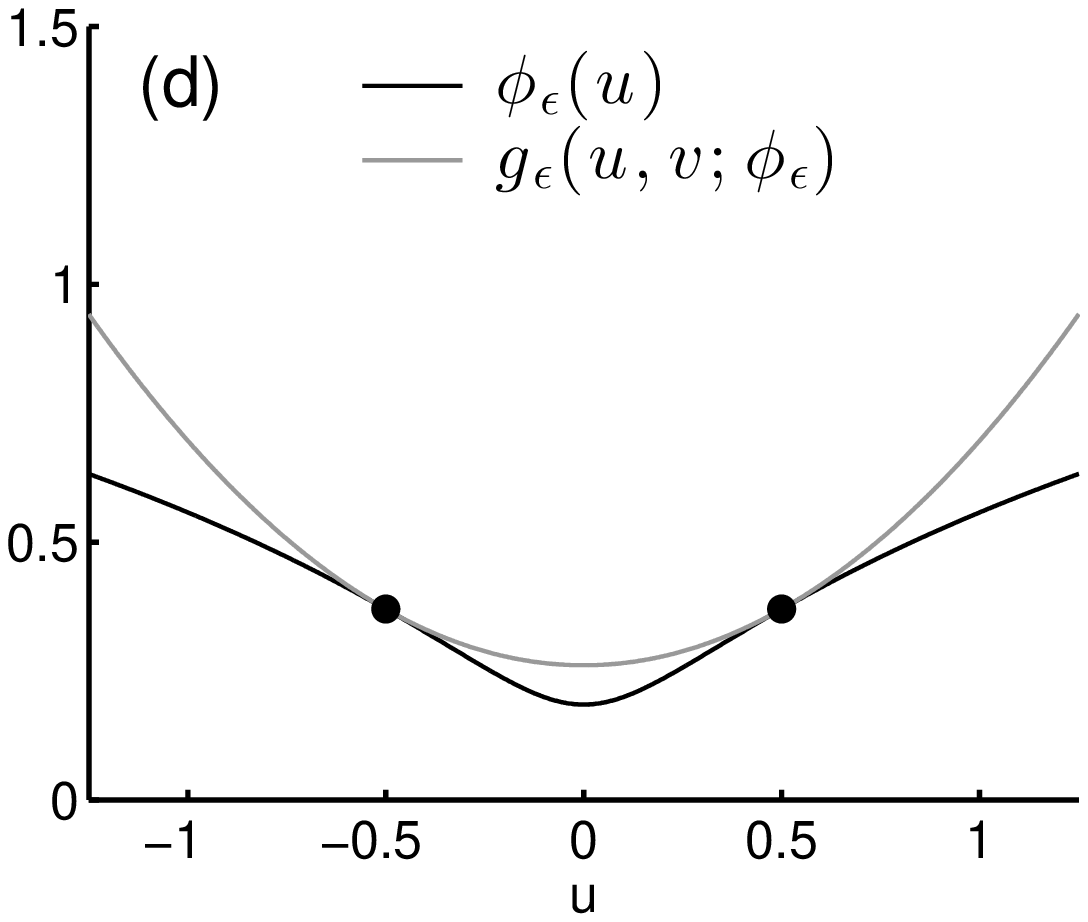}		
	\caption{%
		(a) Convex non-smooth penalty function and corresponding smoothed penalty function.
		(b) Non-convex  and smoothed penalty functions.
		(c) Majorizer (gray) of the smoothed penalty function in (a).
		(d) Majorizer (gray) of the smoothed non-convex penalty function in (b).
		For all the figures we set $\eps = 0.05, v = 0.5, a = 2$.	}
	\label{fig:etea_Example_0_penalty}
\end{figure}

When using sparse-regularized optimization to recover a signal, the $\ell_1$-norm is widely utilized.
To further enhance sparsity, some methods use iteratively re-weighting procedures \cite{Candes_2008_JFAP, Kozlov_2010_geomath, Needell_reweight_2009, Wipf_2010_TSP},
or a non-convex pseudo $\ell_p$-norm ($ 0 < p < 1 $), 
or a mixed-norm in the regularizer \cite{Cetin_tip_2001, Yu_sp_2012, Yaghoobi_2009_TSP, Selesnick_2012_polynomial, Kowalski_2009}.
Other non-convex penalty functions can also be used.
In \cite{smooth_Nikolova_2005_MMS} a logarithm penalty function is discussed (`log' function in Table~\ref{tab:etea_penalty}),
and in \cite{Selesnick_2013_max_sparse}, an arctangent function is used (`atan' function Table~\ref{tab:etea_penalty}).
However, each of these functions is non-differentiable at zero, where $\phi'(0^{-}) \neq \phi'(0^{+})$.
A method, to avoid the discontinuity in the derivative, is to smooth the function.

Consider the smoothed function $\phi_{\eps} : \real \to \nreal$
\begin{align} \label{eqn:etea_smooth}
	\phi_{\eps} (u) := \phi( \smooth( u ; \eps) ),
\end{align}
where function $\smooth : \real \to \nreal$ is defined as
\begin{align}\label{eqn:etea_definition_smooth_function}
	\smooth(u ; \eps) := \sqrt{u^{2}+ \eps}, \quad \eps > 0.
\end{align}
Because $\sqrt{u^2+\eps}$ is always greater than zero, $\phi_{\eps}$ has a continuous first-order derivative
\begin{align}\label{eqn:etea_dphie}
	\phie' (u) = \frac{u }{\sqrt{u^2+\eps}} \phi'( \sqrt{u^2+\eps} ).
\end{align}

Table~\ref{tab:etea_penalty} gives the smoothed penalty functions $\phie$ corresponding to $\phi$ in the first column,
and \figref{fig:etea_Example_0_penalty}(ab) illustrates the penalty function and its smoothed version.
The parameter $\eps$ controls the similarity of $\phie$ to non-smooth function $\phi$.
When $\eps \to 0$, $\phie(u) \approx \phi(u)$.
In practice, we set $\eps$ to a very small number (e.g. $10^{-10}$).

\subsection{Majorization of smoothed penalty function}

We assume the non-smooth function $\phi$ satisfies the following conditions:
\begin{enumerate}
\item 
	$\phi(u)$ is continuous on $\real$. 
\item 
	$\phi(u)$ is twice continuously differentiable on $\mathbb{R} \backslash \{0\}$.
\item
	$\phi(u)$ is even symmetric:  $\phi(u) = \phi(-u)$.
\item
	$\phi(u)$ is increasing and concave on $\real_{+}$.
\end{enumerate}
Under such assumptions, we find a quadratic function $g: \real \times \real \to \real$,
\begin{align} 	\label{eqn:etea_majorizer}
	g(u , v ;\phi)  =  \frac{\phi'(v)}{2v} u^2 + \phi(v) - \frac{v}{2}\phi'(v),   
\end{align}
majorizing $\phi$ when $v \neq 0$ \cite[Lemma~1]{Chen_Selesnick_2014_GSSD}.
However,  $\phi$ might not be differentiable at zero, e.g., all functions $\phi$ in Table~\ref{tab:etea_penalty}. 
To avoid this issue, we use \eqnref{eqn:etea_smooth} to ensure the objective function is continuous and differentiable. 
Then the MM method can be applied without the occurrence of numerical issues (e.g., divide by zero).
The following proposition indicates a suitable majorizer of the smoothed penalty functions.
 
\begin{proposition}\label{pro:etea_pro_1}
If function $\phi$ satisfies the listed four conditions in the previous paragraph and $g$ in \eqnref{eqn:etea_majorizer} is a majorizer of $\phi$, then
\begin{align}\label{eqn:etea_pro1}
	g_{\eps}(u , v ;\phie) = g ( \smooth(u ; \eps) , \smooth(v ; \eps) ;\phi),
\end{align}
is majorizer of $\phie(u) = \phi( \smooth(u ; \eps) ) $ for $u,v \in \real$,
and if writing explicitly, then $g_{\eps}$ in \eqnref{eqn:etea_pro1} is
\begin{align}\label{eqn:etea_smoothed_majorizer}
	g_{\eps}(u , v ;\phie) = \frac{u^2}{2\psi(v)}  + \phie(v) - \frac{v}{2}\phie'(v),
\end{align}
where $\psi(v) = v/\phie'(v)  \neq 0 $.
\end{proposition}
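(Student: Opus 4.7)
The plan is to derive both claims directly from the defining identity
$g_{\eps}(u,v;\phie) := g(\smooth(u;\eps),\smooth(v;\eps);\phi)$
and the majorization property \eqref{eqn:etea_majorizer} of $g$ for $\phi$.

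First, I would verify the two MM conditions \eqref{eqn:etea_mm_condition_a}--\eqref{eqn:etea_mm_condition_b}.
The crucial observation is that $\smooth(v;\eps) = \sqrt{v^2+\eps} \geq \sqrt{\eps} > 0$ for every $v \in \real$,
so the tangent argument fed into $g(\cdot,\cdot;\phi)$ never lands at the nondifferentiable origin of $\phi$;
hence \eqref{eqn:etea_majorizer} is available without reservation. At $u=v$, the equality follows because
$\smooth(u;\eps)=\smooth(v;\eps)$ and $g$ is tight on the diagonal. For $u \neq v$, the inequality follows by
applying the majorization of $g$ at the point $\smooth(u;\eps)$ with tangent point $\smooth(v;\eps)$, giving
$g_{\eps}(u,v;\phie) \geq \phi(\smooth(u;\eps)) = \phie(u)$.

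Second, I would reduce the defining formula to the explicit form \eqref{eqn:etea_smoothed_majorizer}.
Substituting $\smooth(u;\eps)^2 = u^2 + \eps$ into the quadratic term of \eqref{eqn:etea_majorizer} introduces an
$\eps$-residue; the plan is to show that this residue is absorbed by the constant
$-(\smooth(v;\eps)/2)\phi'(\smooth(v;\eps))$ once both are expressed in terms of $\phie'$. Specifically,
\eqref{eqn:etea_dphie} gives $\phi'(\smooth(v;\eps))/\smooth(v;\eps) = \phie'(v)/v = 1/\psi(v)$, which rewrites the
quadratic coefficient and the offending constant in parallel form, so that the two $\eps\phie'(v)/(2v)$
contributions cancel and leave $\phie(v) - v\phie'(v)/2$ as the remaining constant.

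There is no serious obstacle; the argument is essentially a one-line verification for the majorization conditions
and a short algebraic rearrangement for the explicit form. The only bookkeeping point worth highlighting is that
the smoothing is what lets us invoke \eqref{eqn:etea_majorizer} safely: the shift $v \mapsto \smooth(v;\eps)$
pushes the tangent point off the singularity of $\phi'$, and the identity
$\psi(v) = \smooth(v;\eps)/\phi'(\smooth(v;\eps))$ obtained along the way confirms that $\psi$ remains well-defined
for all $v \in \real$, including $v=0$, where the literal ratio $v/\phie'(v)$ would otherwise be indeterminate.
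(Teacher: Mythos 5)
Your proposal is correct and follows essentially the same route as the paper's proof: substitute $u \mapsto \smooth(u;\eps)$, $v \mapsto \smooth(v;\eps)$ into the base inequality $g(\cdot\,,\cdot\,;\phi) \ge \phi(\cdot)$ --- legitimate precisely because $\smooth(v;\eps) \ge \sqrt{\eps} > 0$ keeps the tangent point away from the singularity of $\phi'$ --- and then use \eqnref{eqn:etea_dphie} to rewrite the quadratic coefficient and the constant term in terms of $\phie'$, with the two $\eps$-residues cancelling. The only place you genuinely diverge is the point $v=0$: the paper splits into three cases and, for $z=0$, $x\neq 0$, re-derives the inequality from scratch via the Mean Value Theorem and concavity, then identifies the coefficient $\phi'(\sqrt{\eps})/(2\sqrt{\eps})$ with $\lim_{v\to 0}\phie'(v)/(2v)$ by L'H\^opital's rule, whereas your observation that $\psi(v) = v/\phie'(v) = \smooth(v;\eps)/\phi'(\smooth(v;\eps))$ is well defined and positive for every $v$, including $v=0$, lets the cancellation go through uniformly and dispenses with the case split entirely. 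To realize that advantage, make sure the final write-up carries out the algebra with the residue written as $\eps/(2\psi(v))$ rather than $\eps\,\phie'(v)/(2v)$ as in your middle paragraph, since the latter is literally $0/0$ at $v=0$.
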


The proof of Proposition~\ref{pro:etea_pro_1} is given in \ref{app:etea_A}.
\figref{fig:etea_Example_0_penalty}c and \figref{fig:etea_Example_0_penalty}d give examples of using function \eqnref{eqn:etea_smoothed_majorizer}
to majorize the smoothed penalty functions in \figref{fig:etea_Example_0_penalty}a and \figref{fig:etea_Example_0_penalty}b, respectively.

\section{Exponential transient excision algorithm}

To formulate the problem of exponential transient excision, we define $\RR$ as
\begin{equation}
\label{eqn:etea_R}
	\RR := 
	\begin{bmatrix}
		-r		&1 		&		& 				\\
	   	& -r		&1		& 		&  			\\
		&	& 		\ddots &\ddots	&		 	\\
		&	&	&				-r		&1
	\end{bmatrix}.
\end{equation}
The first-order derivative operator $\DD$ is a special case of $\RR$ with $r=1$. 
In this paper we restrict $ 0 < r < 1 $ to distinguish them.
As a filter,  $\RR$ can be seen to be a first-order filter attenuating low frequencies, with a transfer function
\begin{align} \label{eqn:etea_Rz}
	R(z) : = 1 - r  z^{-1}.
\end{align}
Driving the system $ R(z) $ with the step exponential 
\begin{equation}\label{eqn:etea_x}
	x(n) = 
	\begin{cases}
	0, 			& n <  n_0, \\
	r^{n-n_0}, 	& n \ge n_0,
	\end{cases}
\end{equation}
produces an impulse $\delta(n-n_0)$ as an output.
If input signal $\x$ is a step exponential with rate $r$, i.e., a Type 1 artifact, then 
\begin{equation}	\label{eqn:etea_v}
	\v = \RR\x
\end{equation}
will be sparse.

Note that $r$ should be known beforehand.
In practice, we can estimate it from the time-constant of a Type 1 artifact.
Using observation data $y$, if we can measure the time $N_0$ over which a Type 1 transient decays to half its initial height, 
then $r$ can be found by solving $r^{N_0} = 0.5$.
We ignore the influence of the slowly varying baseline, as we assume the exponential decays much faster. 
If the transients have an approximately equal decay rate, then we can use an average measured form multiple transients.

\subsection{Problem definition}

In signal model \eqnref{eqn:etea_model}, if an estimate $\hat{\x}$ is known, then we can estimate $f$ by
\begin{align}	\label{eqn:etea_f}
	\hat \f = (\y - \hat \x) - \HH (\y - \hat \x),
\end{align}
where $\HH = \BB \AA^{-1}$ is a highpass filter,
where the $\AA$ and $\BB$ matrices are both banded with a structure as \eqnref{eqn:AB}.
Correspondingly, noise $w$ is  $\y - (\hat\x+ \hat\f) = \HH(\y - \hat\x)$,
which indicates a suitable data fidelity term $\norm{ \HH ( \y - \x ) }_2^2$.
Moreover, when the component $x$ is modeled as \eqnref{eqn:etea_x}, we can use $\v  = \RR \x$ as its sparse representation.
Hence, we propose to formulate the estimation of $x$ from noisy data $y$ as the unconstrained optimization problem:
\begin{align}\label{eqn:etea_cost_1}	
	\x{\opt} = \arg \min_{\x}
	\Big\{ 
		P_1(\x) =  \norm{ \HH ( \y - \x ) }_2^2 + \lam \sum_{n}\phie( [\RR \x]_n ) 
	\Big\},				
\end{align}	
where $\lam > 0$ is the regularization parameter related to noise level. 	

\subsection{Algorithm derivation}

Using the majorizer of smoothed penalty function \eqnref{eqn:etea_smoothed_majorizer}, 
and recalling that $\HH = \BB \AA^{-1}$ illustrated in \secref{sec:etea_lti_filter},
we majorize the objective function \eqref{eqn:etea_cost_1},
\begin{align} \label{eqn:etea_cost_mm}
	G(\x , \z ; P_1) 
	= & \ \norm{ \HH ( \y - \x ) }_2^2 + \lam \sum_n g_{\eps} ([\RR \x]_n , [\RR \z]_n ; \phie)		\nonumber\\[0.4em]
	= & \ \norm{ \BB \AA^{-1} ( \y - \x ) }_2^2 + \x^{\tp} \RR^{\tp} \mtxLam{\RR\z} \RR \x + C	
\end{align}
where $C$ does not depend on $\x$, 
and  $\mtxLam{\RR\z} \in \real^{(N-1) \times (N-1)}$ is a diagonal matrix
\begin{align} \label{eqn:etea_mtxLam}
	\mtxLam{\RR\z}_{n,n}  = \frac{\lam}{2 \psi( [\RR\z]_n ) }
\end{align} 
where $\psi(u) := u / \phie'(u)$, 
which is listed in Table~\ref{tab:etea_penalty}.
Then the minimizer of \eqref{eqn:etea_cost_mm} is given by
\begin{align}\label{eqn:etea_mm_x}
	\x  =  \Big[ \AA^{-\tp} \BB^{\tp} \BB \AA^{-1} + \RR^{\tp} \mtxLam{\RR\z} \RR \Big]^{-1} \AA^{-\tp}\BB^{\tp} \BB \AA^{-1} \y.			
\end{align}

Note that, calculating \eqref{eqn:etea_mm_x} directly requires  the solution to a large dense system of equations.
However, since $\AA$ is invertable, we can factor $\AA$ out and write \eqnref{eqn:etea_mm_x} as
\begin{align} \label{eqn:etea_mm_x_2}
	\x^{(k+1)} = \AA \Big[ \BB^{\tp} \BB + \AA^{\tp} \RR^\tp \mtxLam{\RR\x^{(k)}} \RR\AA \Big]^{-1} \BB^{\tp} \BB \AA^{-1} \y,
\end{align}
where the MM procedure \eqnref{eqn:etea_mm_iteration} is adopted with $\z = \x^{(k)}$.
Note that the matrix to be inverted in \eqnref{eqn:etea_mm_x_2} is banded,
so that the solution can be computed efficiently by fast banded system solvers (e.g., \cite{ban_Kilic_2008, ban_Kilic_2013}).
Moreover, in \eqnref{eqn:etea_mm_x_2}, all matrix multiplications are between banded ones.
Table~\ref{alg:etea} summarizes the algorithm to solve \eqref{eqn:etea_cost_1}.

\begin{table}[t!]
\caption{Exponential Transient Excision Algorithm.}
\label{alg:etea}
\begin{subequations}\label{eqn:etea}
	\begin{empheq}[box=\fbox]{align*}
		& \text{Input:} ~\y, ~\lam,~r,~\AA,~\BB  		\nonumber		\\
		& \text{Initialization:} ~\x \in \mathbb{R}^{N} 				\\
		& \b \mygets  ~\BB^{\tp}\BB \AA^{-1} \y             			\\
		& \text{Repeat } \nonumber \\
		& \qquad 
			\v \mygets \ \RR \x  		\\
		& \qquad 
			[\LAM]_{n,n} \mygets  \frac{\lam}{2 \psi(v(n))} 			\\
		& \qquad
			\QQ	 \mygets \BB^{\tp} \BB +\AA^{\tp}\RR^{\tp} \LAM \RR\AA \\
		& \qquad
			\x\ \mygets \AA \QQ^{-1} \b	\\
		& \text{Until convergence } 					\nonumber \\
		& \f \mygets ( \y - \x ) - \HH ( \y - \x ) \\
		& \text{Return: } \x,~\f. \nonumber
	\end{empheq}
\end{subequations}
\end{table}

\subsection{Optimality condition and parameter selection}

For problem \eqnref{eqn:etea_cost_1}, it is difficult to derive optimality conditions directly.
Here, we consider the optimality condition indirectly via an equivalent problem,
similar to the discussion of optimality condition for total variation problems in \cite{Bach_2012_now, Fuchs_2004_Tinfo}
and for $\LPFTVD$ in \cite{Selesnick_2013_lpftvd}.

To facilitate our derivation, we firstly denote the optimal solution of \eqnref{eqn:etea_cost_1} by $\x^{*}$.
We then define $\v^{*} = \RR \x^{*}$ where $\RR$ is given by \eqnref{eqn:etea_R}.
We define $\GG \in \real^{N \times (N-1)}$,
\begin{equation}
	\GG := 
	\left[
	\begin{array}{llccccc}
		0\\
		1			&0 												\\
		r			&1 		&0		& 								\\
	   	r^2			&r 		&1		&0 		&  						\\
		\vdots		&		& 		&\ddots 	&\ddots	&	\\
		r^{N-3}	&\cdots		&\cdots		&r		& 1  	&0			\\
		r^{N-2}	&\cdots		&\cdots		&r^2 	&r		& 1  
	\end{array}	
	\right]			
	\label{eqn:S}	
\end{equation}
which satisfies
\begin{align}\label{eqn:RS}	
	\RR\GG = \II.
\end{align}
The operator $\GG$ acts as an inverse filter of $\RR$.
Using the above assumptions, we derive the following proposition regarding optimality conditions for \eqnref{eqn:etea_cost_1}.

\begin{proposition}\label{pro:etea_pro_2}
If $\x^{*}$ is an optimal solution to \eqnref{eqn:etea_cost_1}, then $\v^{*} = \RR \x^{*}$ is an optimal solution to problem
\begin{align}\label{eqn:etea_problem_v}
	\v^{*} = \arg \min_{\v}   Q(\v) ,	
\end{align}
where
\begin{align}
	Q(\v) = \norm{ \HH ( \y_0 - \GG\v ) }_2^2  + \lam \sum_n \phie ([\v]_n).
\end{align}
and $\y_0 = \y - (\x^{*} - \GG \RR\x^{*})$.
\end{proposition}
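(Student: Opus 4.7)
The plan is a standard change-of-variables argument exploiting the identity $\RR\GG = \II$ from \eqref{eqn:RS}, combined with a proof by contradiction. The key obstacle is that $\RR$ is an $(N-1)\times N$ matrix with a non-trivial null space, so the map $\x \mapsto \RR\x$ is not invertible and one cannot simply substitute $\x = \GG\v$ into $P_1$. Instead, one must parameterize the fibre $\{\x : \RR\x = \v\}$ using a suitable null-space offset, and that offset has to be tailored to $\x^{*}$ so that the correspondence lines up with the optimum.

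First I would introduce the null-space component $\u^{*} := \x^{*} - \GG\RR\x^{*} = \x^{*} - \GG\v^{*}$ and verify, using \eqref{eqn:RS}, that $\RR\u^{*} = \RR\x^{*} - (\RR\GG)\v^{*} = \v^{*} - \v^{*} = \mathbf{0}$. Note that $\y_0$ as defined in the statement is precisely $\y - \u^{*}$.

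Next, for each candidate $\v \in \real^{N-1}$ I would define the lifted vector $\x(\v) := \GG\v + \u^{*}$. By \eqref{eqn:RS} this lift satisfies $\RR\,\x(\v) = \v$, and by construction $\x(\v^{*}) = \GG\v^{*} + (\x^{*} - \GG\v^{*}) = \x^{*}$. Substituting into the objective of \eqref{eqn:etea_cost_1} and using $\y - \x(\v) = \y_0 - \GG\v$ together with $\RR\,\x(\v) = \v$, I would obtain $P_1(\x(\v)) = \norm{\HH(\y_0 - \GG\v)}_2^2 + \lam\sum_n \phie([\v]_n) = Q(\v)$ for every $\v$; in particular $Q(\v^{*}) = P_1(\x^{*})$.

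Finally I would close the argument by contradiction: if some $\tilde\v$ satisfies $Q(\tilde\v) < Q(\v^{*})$, then $\tilde\x := \x(\tilde\v)$ would give $P_1(\tilde\x) = Q(\tilde\v) < Q(\v^{*}) = P_1(\x^{*})$, contradicting optimality of $\x^{*}$ for $P_1$. The only delicate point is the choice of the null-space offset $\u^{*}$: because it depends on $\x^{*}$, the data vector $\y_0$ does too, which is exactly why the proposition's reformulated problem for $\v$ uses this $\x^{*}$-dependent right-hand side rather than simply $\y$.
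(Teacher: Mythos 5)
Your proof is correct and follows essentially the same route as the paper: your null-space offset $\u^{*}=\x^{*}-\GG\RR\x^{*}$ is exactly the paper's $\d$, and both arguments rest on $\RR\GG=\II$, $\RR\u^{*}=\mathbf{0}$, and the substitution $\x=\GG\v+\u^{*}$ giving $P_1(\GG\v+\u^{*})=Q(\v)$. Your direct contradiction step is in fact a cleaner way to close the argument than the paper's chain of intermediate constrained problems, but the underlying idea is identical.
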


\begin{proof}
We define $\x_0 \in \real^{N}$ as
\begin{align}\label{eqn:etea:x0}
	\x_0 := \GG \v^{*} = \GG \RR \x^{*}
\end{align}
where $\x^{*}$ is the optimal solution to \eqnref{eqn:etea_cost_1}.
Note that $\x_0$ and $\x^{*}$ share an identical  $\v^{*} = \RR \x_0 = \RR \x^{*}$.
Moreover, we define the difference between $\x^{*}$ and $\x_0$ as
\begin{align}\label{eqn:etea:d0}
	\d := \x^{*} - \x_0.
\end{align}
It follows that $\RR \d = \RR \x^{*} - \RR \x_0 = \mathbf{0}$. As a consequence, $\x_0$ is the optimal solution to problem
\begin{subequations}\label{eqn:etea_pro_1}
\begin{align}
	\x_0 = 	&	\arg \min_{\x} P_1(\u)  \\
			& 	\ \text{s.t. } \x = \u - \d.
\end{align}
\end{subequations}
Note $ x_0(0) = 0 $ must be satisfied in \eqnref{eqn:etea_pro_1} because of the definition of $\GG$.
Thus, we can write an equivalent problem to \eqnref{eqn:etea_pro_1} using \eqnref{eqn:etea:x0},
\begin{subequations}\label{eqn:etea_pro_2}
\begin{align}
	\{\x_0, \v^{*} \} = 	\arg \min_{\x,\v} P_1( \u) 	\\
							\text{s.t. }  \x = \u - \d	\\
							 \x = \GG \v \quad
\end{align}
\end{subequations}
where $\{\x_0, \v^{*} \}$ must be the optimal solution.
In problem \eqnref{eqn:etea_pro_2}, $\u$ is uniquely determined by linear functions of $\v$ and $\d$, 
so problem \eqnref{eqn:etea_pro_2} can be simplified by substituting the variables,
\begin{subequations}\label{eqn:etea_pro_3}
\begin{align}
	\{\x_0, \v^{*} \}= 	\arg \min_{\x,\v} Q( \v) 	\\
		\text{s.t. }  \x = \GG \v
\end{align} 
\end{subequations}
where $Q(\v) = P_1(\u)$, with $\u = \GG \v +\d$, and can be written explicitly,
\begin{align}\label{eqn:etea_Qv1}
	Q(\v) & = \norm{ \HH ( \y - \GG\v -\d) }_2^2  + \lam \sum_n \phie ([\RR\GG\v + \RR\d]_n).
\end{align}
Because $\RR\GG = \II$ and $\RR\d = \mathbf{0}$, we simplify \eqnref{eqn:etea_Qv1} as
\begin{align}\label{eqn:etea_Qv2}
		Q(\v) &  = \norm{ \HH ( \y_0 - \GG\v ) }_2^2  + \lam \sum_n \phie ([\v]_n)
\end{align}
where $\y_0 = \y - \d$. 

In this case, the equality constraint in \eqnref{eqn:etea_pro_3} is redundant.
We can solve the unconstrained problem \eqnref{eqn:etea_problem_v} first, and then compute $\x_0$ by \eqnref{eqn:etea:x0}.
This implies that $\v^{*}$ is the optimal solution to  \eqnref{eqn:etea_problem_v} and satisfies $\nabla Q (\v^{*}) = \mathbf{0}$.
\end{proof}

Using Proposition~\ref{pro:etea_pro_2}, instead of problem \eqnref{eqn:etea_cost_1},
we alternatively consider the optimality condition of problem \eqnref{eqn:etea_problem_v}
where as long as $\x^{*}$ is an optimal solution of \eqnref{eqn:etea_cost_1}, $\v^{*}$ is an optimal solution of \eqnref{eqn:etea_problem_v}.
Moreover, we can rewrite the optimality condition of  \eqnref{eqn:etea_problem_v} as
\begin{align}
	p(n) = \lam \phie'([\v^{*}]_n) = \lam \phie'( [\RR \x^{*}]_n ),
	\label{eqn:etea_opt_condition}
\end{align}
where $\p = 2 \GG^{\tp} \HH^{\tp}\HH(\y_0 - \x_0)$, which can be rewritten as
\begin{align}
	\p = 2 \GG^{\tp} \HH^{\tp}\HH(\y- \x^{*}).
	\label{eqn:etea_p}
\end{align}
Note that, writing $\p$ as \eqnref{eqn:etea_p} does not require the solution to problem \eqnref{eqn:etea_problem_v} or to compute $\d$ and $\y_0$.
Therefore, although we derived an indirect way to verify the optimality condition for \eqnref{eqn:etea_cost_1}, the final procedure is direct.

\medskip
\textbf{Setting parameter $\lam$.}
The equation \eqnref{eqn:etea_opt_condition} can be used as a guide to set the regularization parameter $\lam$.
Suppose the observation data is composed of Gaussian noise only, then a proper value of $\lam$ should make the solution of \eqnref{eqn:etea_cost_1} almost identically zero.
Thus, its sparse representation $\v^{*}$, given by \eqnref{eqn:etea_v}, should be all zero as well.
In this case, we can calculate a vector $\q$, which depends only on noise,
\begin{align}
	\q = 2 \GG^{\tp} \HH^{\tp}\HH \w.
\end{align}
When $\p$ is calculated by \eqnref{eqn:etea_p}, we find a constraint that $-\lam < p(n) < \lam$, by the property of $\phie$.
Furthermore, since $\x^{*}\approx \mathbf{0}$ is expected when the observation is pure noise (i.e., $\y = \w$), 
the values of $\q \approx \p$ can be considered bounded 
\begin{align}
	q(n) \in [-\lam, +\lam],  
	\quad \text{when } \abs{v(n)} \le \beta_{\eps}, \text{ for } n \in \mathbb{Z}_N
	\label{eqn}
\end{align}
with high probability. 
The value $\beta_{\eps}$ here is related to $\eps$, and since $\eps$ is extremely small, it can be simply assumed to be 0.
As a consequence, the value of $\lam$ needs to satisfy
\begin{align}
	\lam \ge \max\big\{ \abs{2 [ \GG^{\tp} \HH^{\tp}\HH \w ]_n}, n \in \mathbb{Z}_N \big\}.
\end{align}
Further, if the statistical property of the noise can be exploited, e.g., $w(n) \sim \mathcal{N}(0, \sigma_w^2)$,
$\lam$ can be set statistically, such that
\begin{align}\label{eqn:etea_three_sigma_h_1}
	\lam = 2.5 \sigma_{w} \norm{ 2 \h_1 }_2
\end{align}
where $\sigma_{w}$ is the standard deviation of the noise and $\h_1$ is the impulse response corresponding to system $\GG^{\tp} \HH^{\tp}\HH$.
Note that, $\GG^{\tp} \HH^{\tp}\HH$ is a filter with a transfer function  $P_1(z) = H^2(z) / {R(z)}$.
Hence, although we give the matrix $\GG$ to derive the approach to set $\lam$, 
it is not required to compute $\h_1$
because the filter $P_1(z)$ can be implemented directly using $H(z)$ in \eqnref{eqn:etea_banded_filter}, and $R(z)$ in \eqnref{eqn:etea_Rz}.

\subsection{Example: synthetic signal}\label{sec:etea_example_1}

\begin{figure}[!t]
\centering	
	\includegraphics[scale = \figurescale]{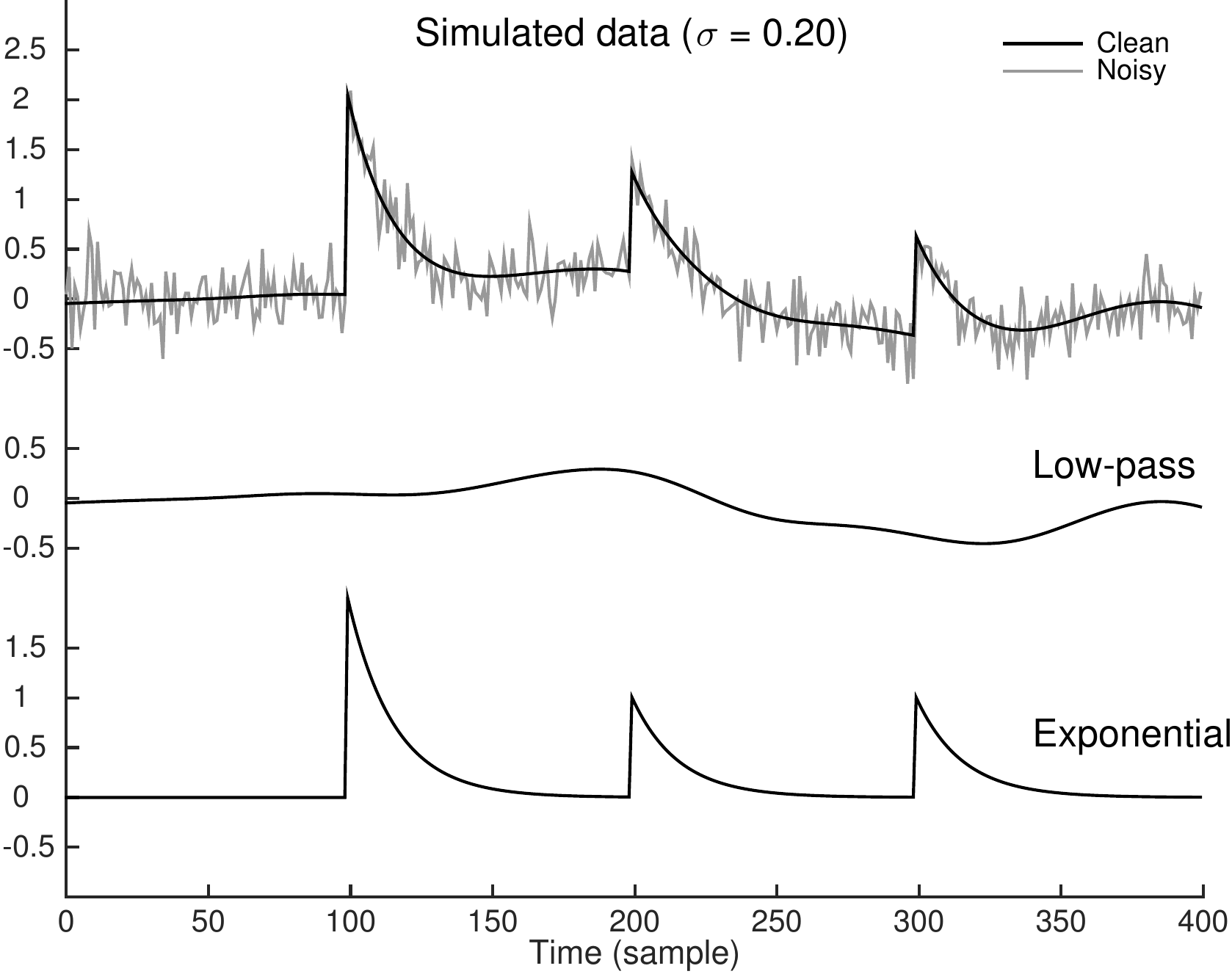} 
\caption{\textbf{Example 1:}~Simulated data comprising a lowpass component and exponential transients.}
\label{fig:etea_Example_1_data}
\end{figure}
\begin{figure}[!t]
\centering	
	\includegraphics[scale = \figurescale]{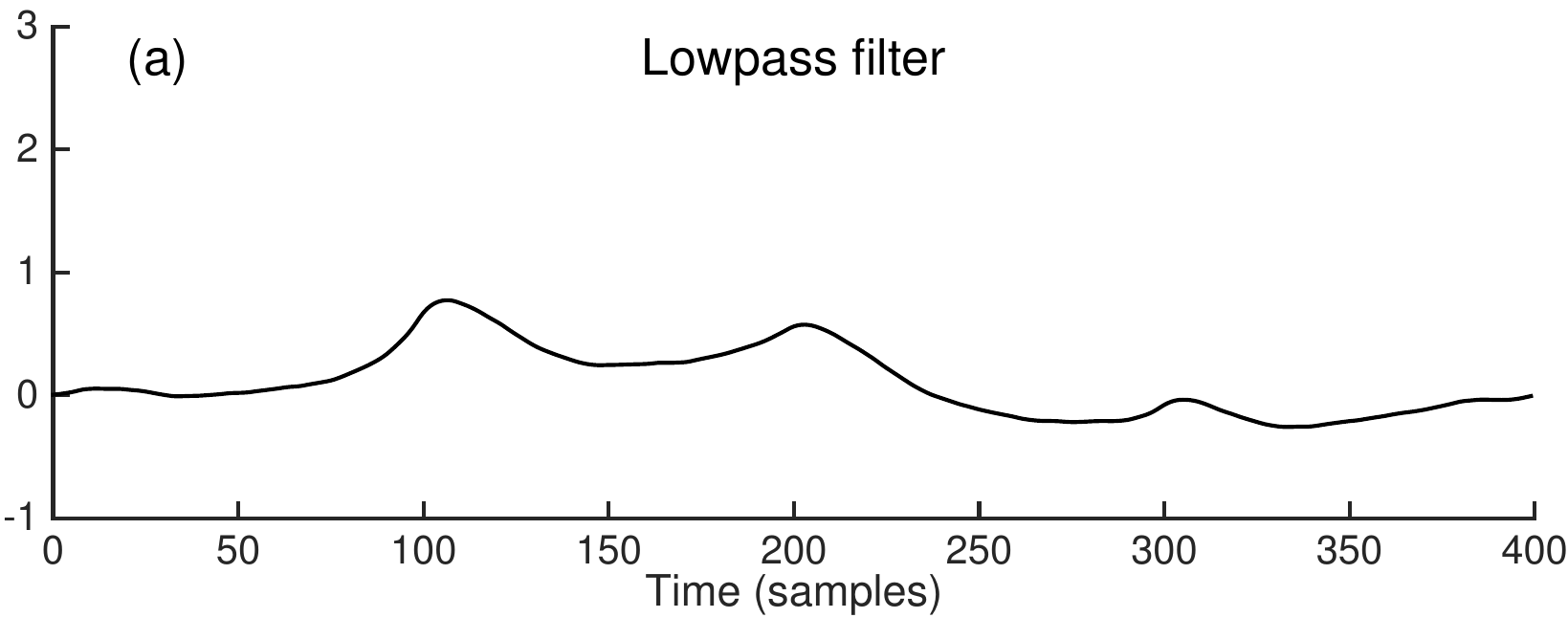}   	\\
	\includegraphics[scale = \figurescale]{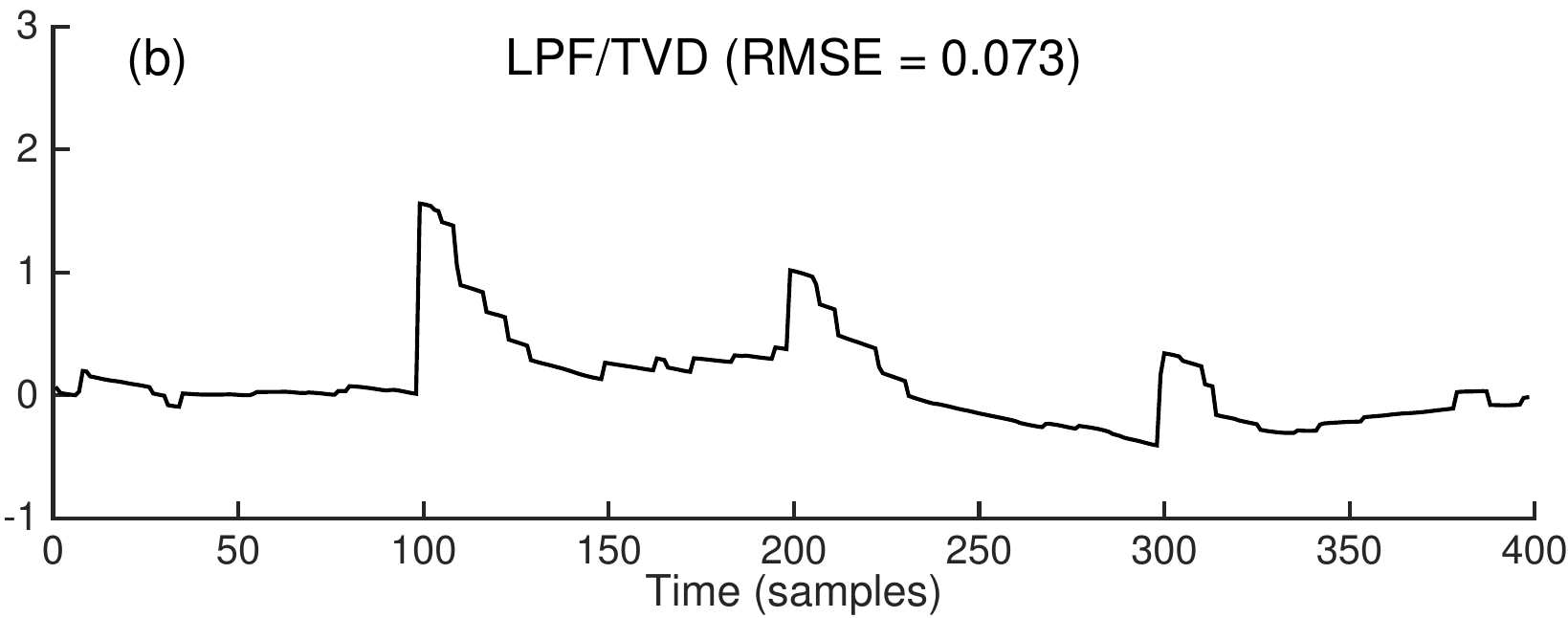} 	\\
	\includegraphics[scale = \figurescale]{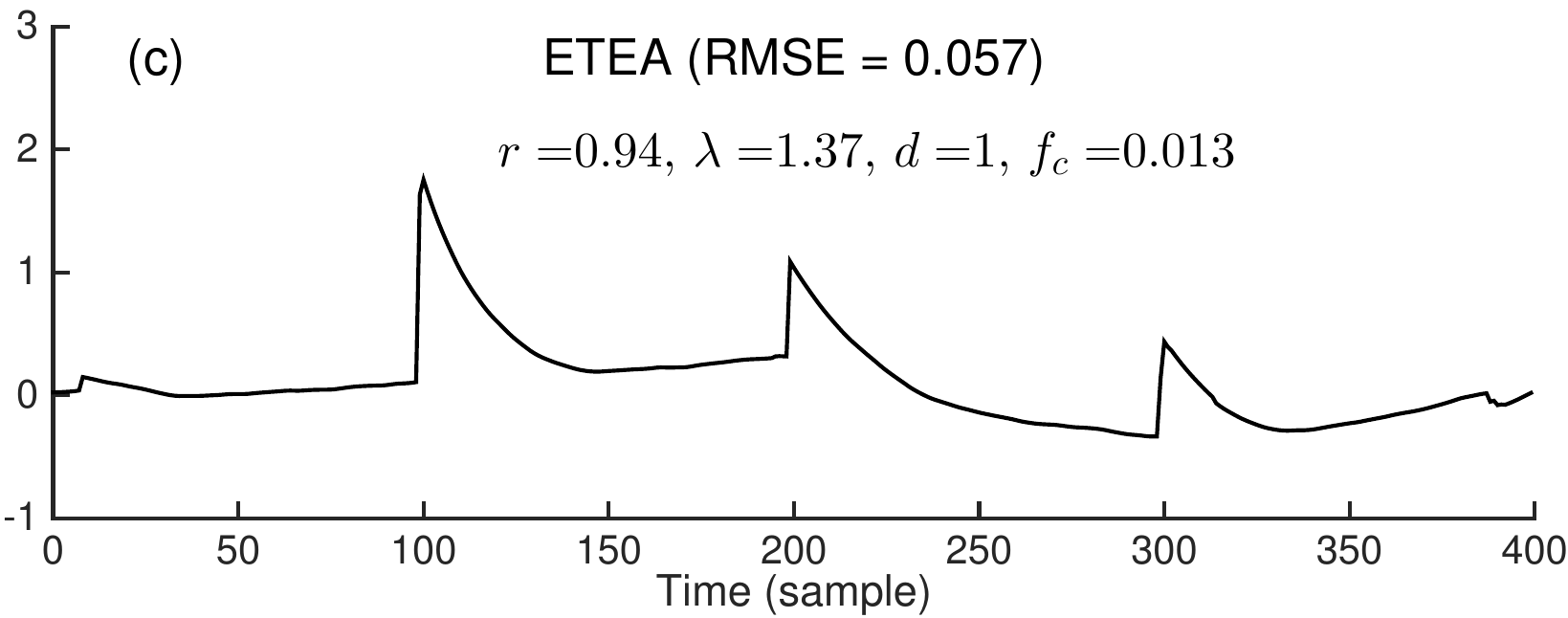} 
	\caption{\textbf{Example 1:}~
		The denoising results of (a) lowpass filtering, (b) $\LPFTVD$, (c) $\name$ (proposed method).}
	\label{fig:etea_Example_1_results}
\end{figure}
\begin{figure}[!t]
\centering
	\includegraphics[scale = \figurescale]{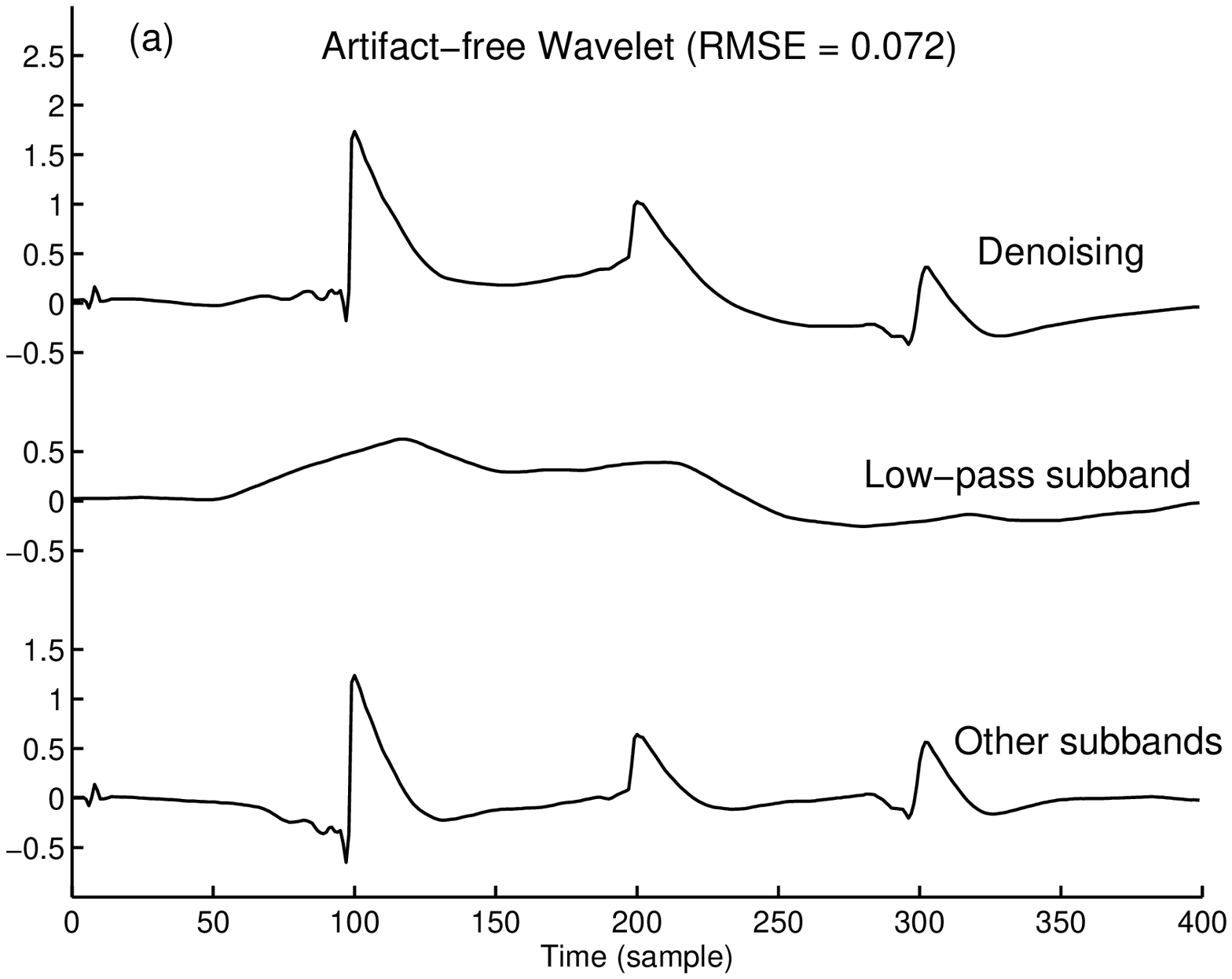}	\\ 	
	\includegraphics[scale = \figurescale]{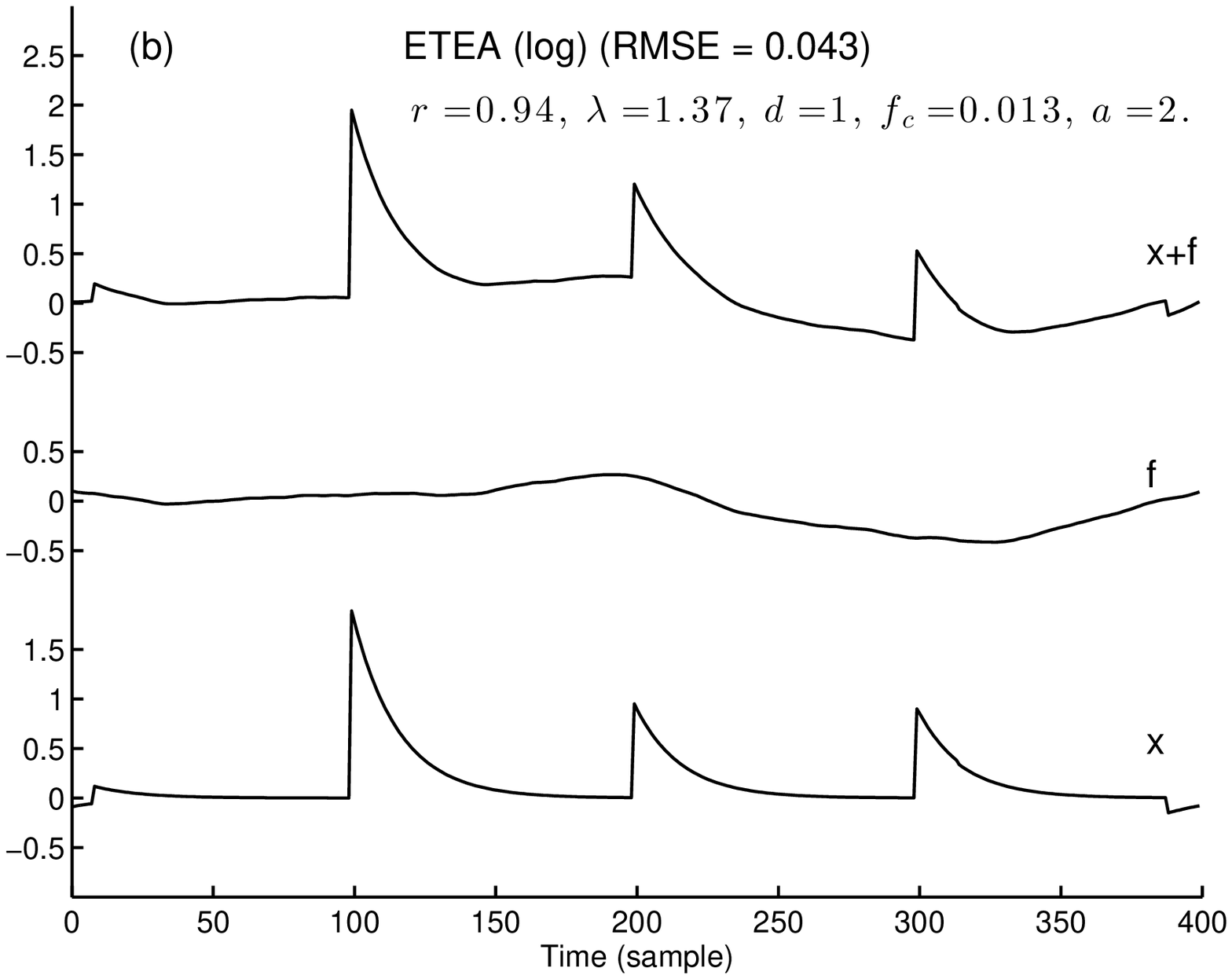} 
	\caption{\textbf{Example 1:}~Comparison of denoising and decomposition results using 
		(a) Wavelet-based method and, 
		(b) $\name$ with non-convex regularization.}
\label{fig:etea_Example_1_decomp}
\end{figure}

To illustrate $\name$, the algorithm is tested on a simulated signal and compared  to other methods. 
\figref{fig:etea_Example_1_data} shows the test signal and its components:
a lowpass signal $f$, a transient component $x$, composed of three exponential decays,
and white Gaussian noise with $\sigma_w = 0.20$. 
The noisy observation is shown in \figref{fig:etea_Example_1_data} in gray.
\medskip

\textbf{Denoising.}
The result of conventional lowpass filtering is shown in \figref{fig:etea_Example_1_results}a.
The result is smooth, but the step edges of each pulse are diminished.
Moreover, if compared to the true low-pass baseline, the result is also distorted.
The result of using $\LPFTVD$ \cite{Selesnick_2013_lpftvd} is illustrated in \figref{fig:etea_Example_1_results}b.
Although the abrupt jumps have been preserved, serious staircase effects appear.
$\LPFTVD$ uses the first-order derivative operator $\DD$ in regularization.
The derivative of an exponential is another exponential,
therefore the derivative is unlikely to have a sufficiently sparse representation. 
A compensative method is to widen the bandwidth of the lowpass filter, however, this leads to a noisy result. 
The result presented in \figref{fig:etea_Example_1_results}b is selected by tuning all necessary parameters of $\LPFTVD$
to optimize RMSE (root-mean-square error) value.

The result of the proposed ETEA method is shown in \figref{fig:etea_Example_1_results}c. 
The step edges and decay behavior are well preserved (RMSE = 0.057). 
We use a smoothed $\ell_1$-norm penalty function.
In this example, we set $\eps = 10^{-10}$, $r = 0.94$, filter order parameter $d = 1$, 
cut-off frequency $f_c = 0.013 \mathrm{~cycle/sample}$, and regularization parameter is calculated by \eqnref{eqn:etea_three_sigma_h_1}
based on the noise principle.

\medskip
\textbf{Decomposition.}
Wavelet-based methods for artifact correction have been described in \cite{Akhtar_2012_SP, artifacts_Islam_2014, Molavi_2012, Sato_2006_NeuroImage}.
In \figref{fig:etea_Example_1_decomp}a, 
we illustrate denoising and decomposition results obtained using the stationary (un-decimated) wavelet transform \cite{CD95}
with Haar wavelet using hard-threshold determined by the $\sqrt{2\log{N}} \sigma_{w}$ thresholding scheme. 
The denoised result is further enhanced by the artifact-free denoising method \cite{Durand_2001_ICASSP, DurandFroment_2003_SIAM}
which uses total variation minimization to overcome pseudo-Gibbs oscillations.
Although the denoising output is relatively smooth and captures the discontinuities, the decomposed components are both distorted.
Compared to the true components in \figref{fig:etea_Example_1_data},
the lowpass subband deviates from true lowpass component (especially at about $n=100$).
The transient component, reconstructed from other subbands after denoising, has to compensate for the error.
Hence, the estimated transient component does not have a zero baseline (see bottom of \figref{fig:etea_Example_1_decomp}a).

Non-convex penalty functions can be used to improve the result of ETEA.
Here we use the smoothed non-convex logarithm penalty function in Table~\ref{tab:etea_penalty}  ($a = 2$).
The filtering result is shown in \figref{fig:etea_Example_1_decomp}b, where discontinuities are more accurately preserved, and the RMSE is reduced to 0.043
compared with the result in \figref{fig:etea_Example_1_results}c.
We also illustrate the decomposed $f$ and $x$ components in \figref{fig:etea_Example_1_decomp}b.
$\name$ recovers both of the components accurately. 
The algorithm run-time is not affected by the choice of penalty function.
In this example, 50 iterations of the algorithm takes about 25 ms on a MacBook Pro 2012 with a 2.7 GHz CPU,
implemented in Matlab 8.4.

\subsection{Example: artifact removal of ECoG data}

Conventional EEG studies and most clinical EEG applications are restricted below 75 Hz \cite{Rangayyan_book}.
Advanced measuring methods such as ECoG records multichannel cortical potentials from the micro-electrodes located inside human skull with a higher sampling rate.
In this example, we use a 5-second ECoG signal epoch, with a sampling rate of 2713 Hz,
recorded from a mesial temporal lobe epilepsy (MTLE) patient.

\figref{fig:etea_Example_2_x}a shows the raw data in gray and the corrected data by ETEA in black.
There are two Type~1 artifacts identified in this 5-second epoch. 
One is at about $t = 1.20$ second, and the other is at about $t = 3.60$ second.
In signal model \eqnref{eqn:etea_model}, suppressing component x which represents artifacts,
the corrected data $(f+w)$ should preserve the components.
We show the corrected data in \figref{fig:etea_Example_2_x}a, where the two Type~1 spikes are correctly removed.
The decomposed artifact is illustrated in \figref{fig:etea_Example_2_x}a as well, which adheres a zero-baseline.

Since wavelet-based methods have been successfully applied to suppress artifacts 
\cite{Akhtar_2012_SP, artifacts_Islam_2014, Molavi_2012}.
a comparison with a wavelet-based method is shown in \figref{fig:etea_Example_2_x}b.
We use the stationary (un-decimated) wavelet transform \cite{CD95} 
with Haar wavelet filter and the non-negative garrote threshold function \cite{wave_Gao_1998},as recommended in \cite{artifacts_Islam_2014}.
The thresholding has been applied to all the subbands except the lowpass band,
and the artifact component is obtained by subtracting the corrected data from the raw data.
As shown, this approach estimates transient pulses but the estimated pulse adheres to the shape of the Haar wavelet filter: a positive-negative pulse 
(see the bottom square box in \figref{fig:etea_Example_2_x}b), but not the true Type~1 artifact in \figref{fig:etea_Example_0_type}b.

To more clearly illustrate the estimated results by the two methods, 
we show the details of the corrected data and the artifact from $t = 3.50$ to $3.65$ second in dashed line boxes
in \figref{fig:etea_Example_0_type}a and \figref{fig:etea_Example_0_type}b, respectively.
Since the wavelet-based method cannot correctly estimate Type~1 artifact in this case, 
a `bump' can be observed in \figref{fig:etea_Example_2_x}b to compensate the error.
Moreover, this inappropriate estimation will cause the corrected data to change before Type~1 artifact actually occurs
(see \figref{fig:etea_Example_0_type}b).
In contrast, ETEA estimates the artifact as the abrupt drift with a decay, 
then it exicises the artifact without influencing the data before the transient occurs (see \figref{fig:etea_Example_0_type}a).

In this example, for the signal with a length of $13565$ samples,
the proposed algorithm converges within 40 iterations, and takes about 330 ms on a MacBook Pro 2012 with 2.7 GHz CPU,
implemented in Matlab 8.4.
The cost function history is shown in \figref{fig:etea_Example_2_cost}.

\begin{figure}[!t]
\centering
	\includegraphics[scale = \figurescale] {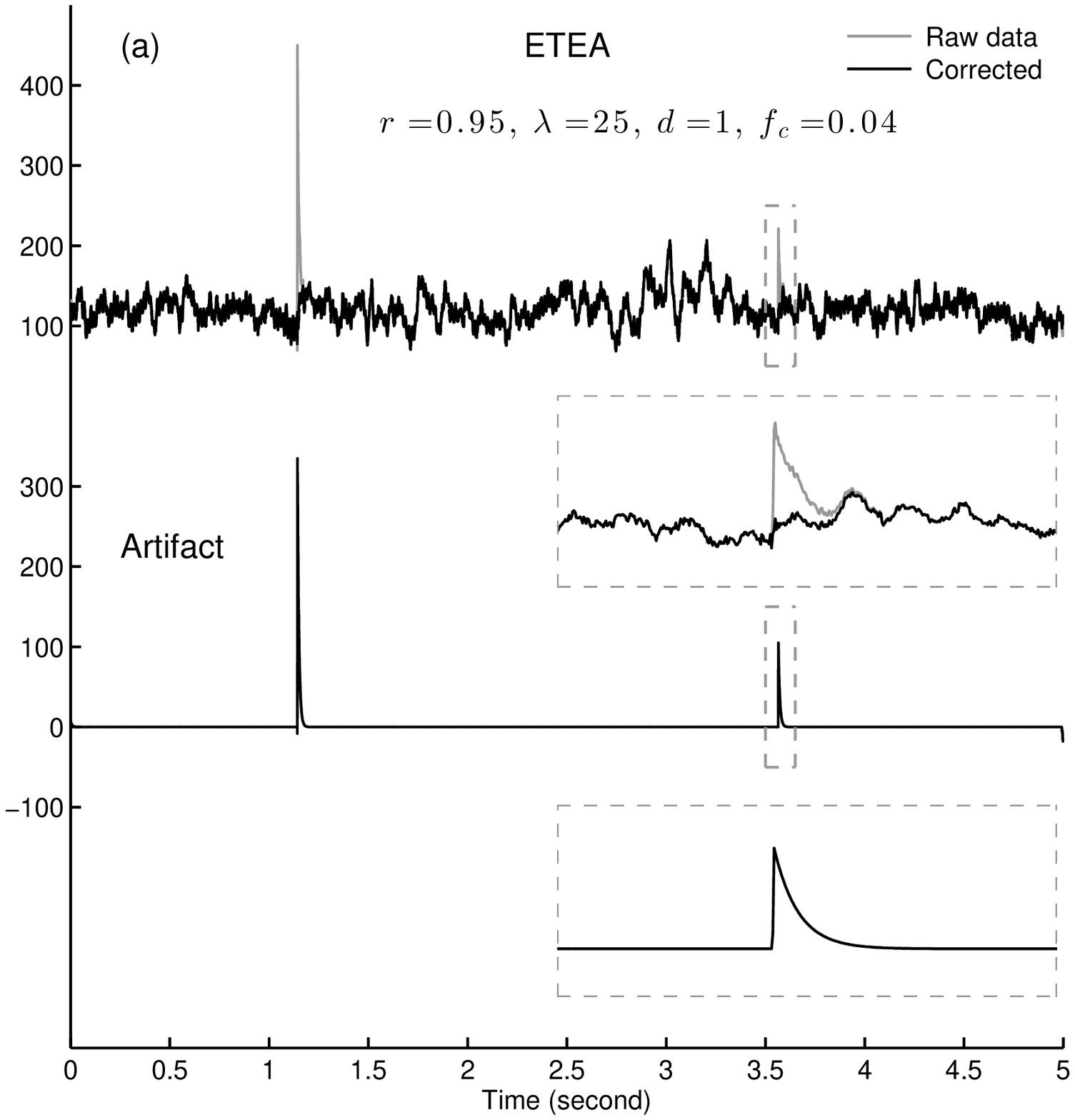} \\
	\includegraphics[scale = \figurescale] {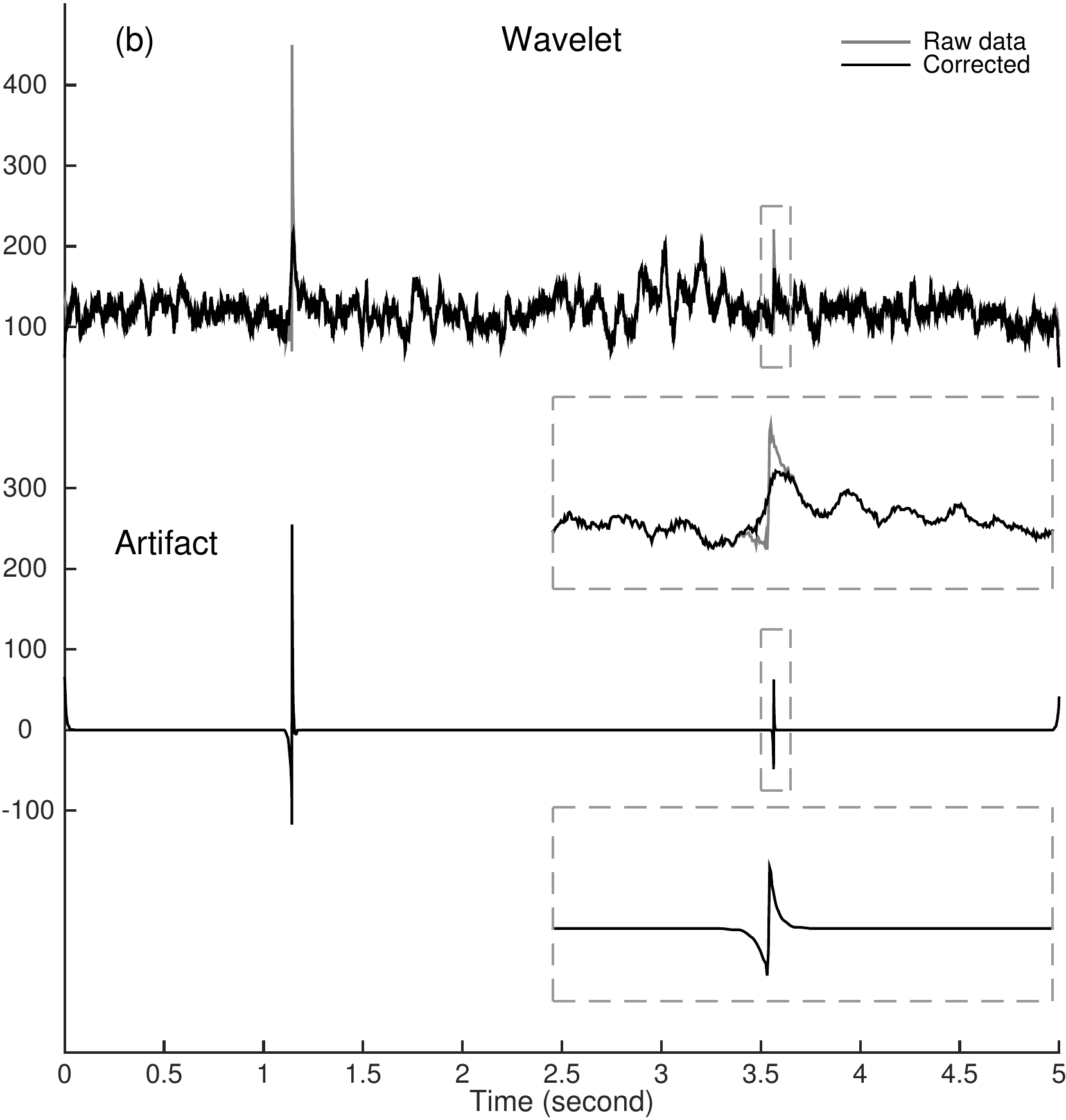}
	\caption{\textbf{Example 2:}~ Comparison of artifacts removal for ECoG data. 
		(a) ETEA (proposed method), (b) wavelet-based method.}
	\label{fig:etea_Example_2_x}
\end{figure}
\begin{figure}[!t]
\centering
	\includegraphics[scale = \figurescale]{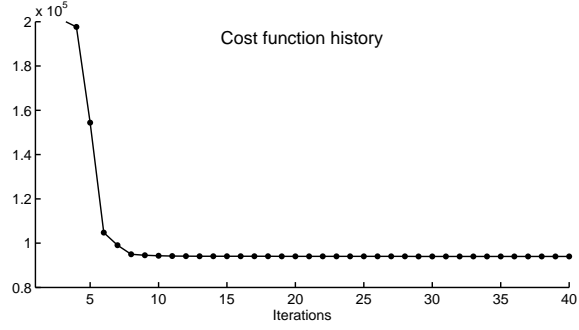} \\
	\caption{\textbf{Example 2:}~Cost function history.}
	\label{fig:etea_Example_2_cost}
\end{figure}

\section{Higher-order ETEA}   \label{sec:etea_higher_order_etea}

\begin{figure}[!t]
\centering
	\includegraphics[scale = \figurescale] {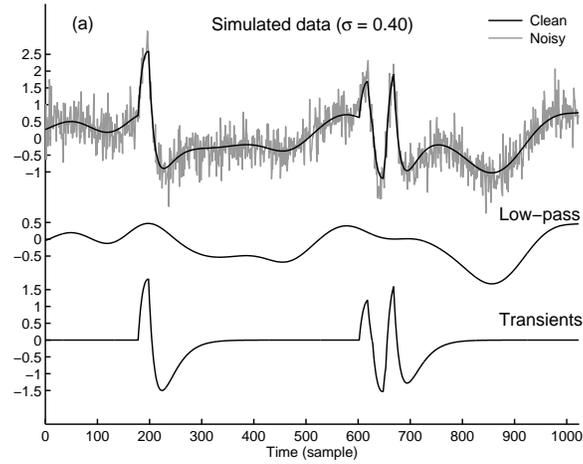} \\
	\caption{\textbf{Example 3:}~The simulated test data, 
		generated by driving an AR filter with a sparse sequence and adding noise.} 
	\label{fig:etea2_Example_1_data}
\end{figure}
\begin{figure}[!t]
\centering
	\includegraphics[scale = \figurescale] {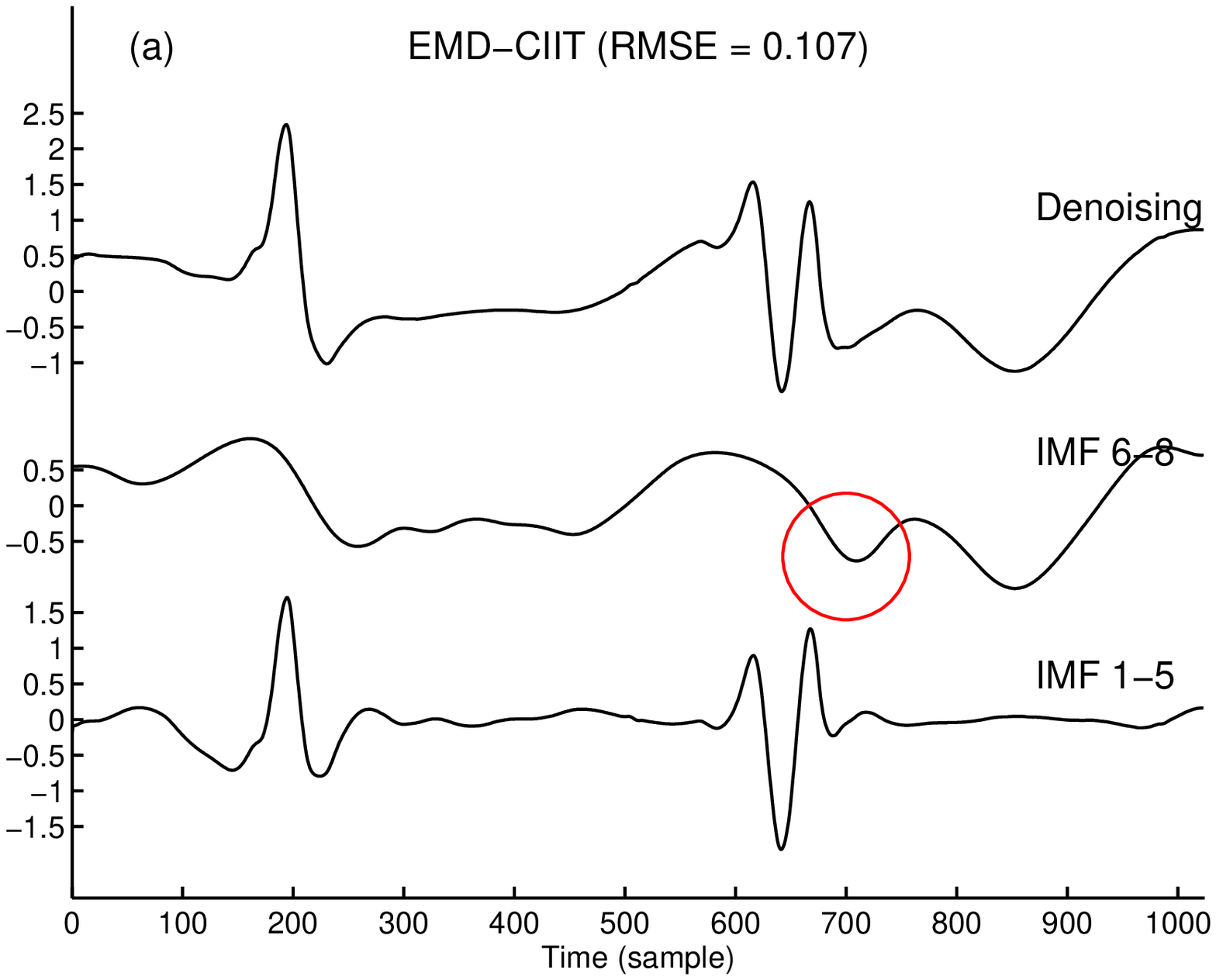} \\
	\includegraphics[scale = \figurescale] {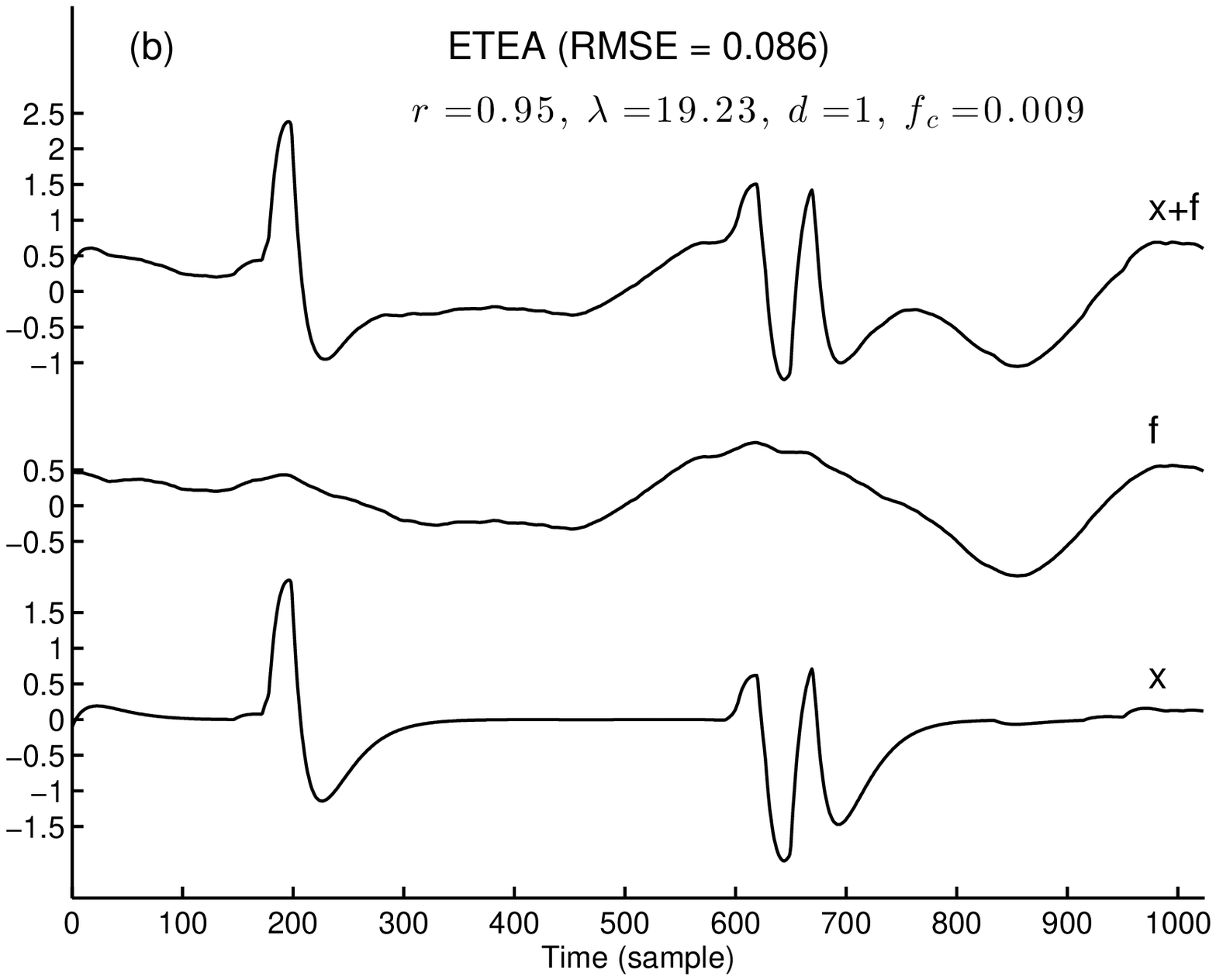} \\
	\includegraphics[scale = \figurescale] {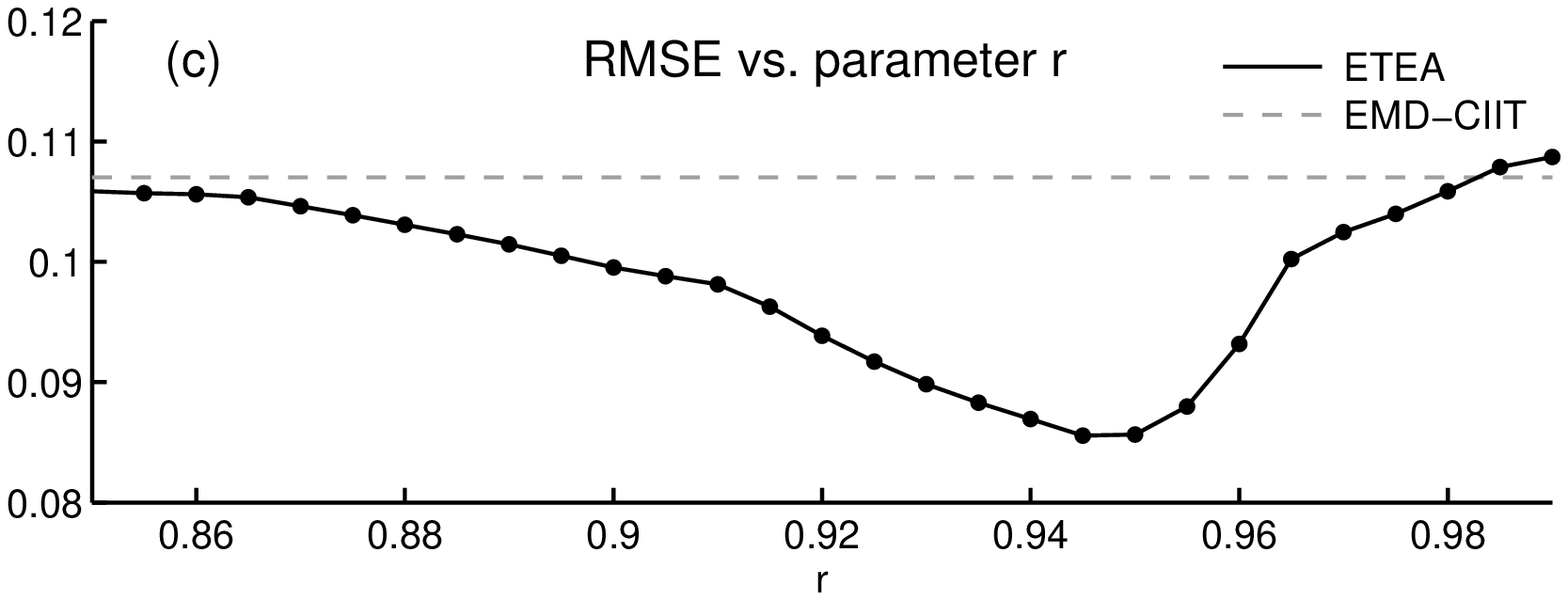} 	
	\caption{\textbf{Example 3:}~Denoising and decomposition results by (a) EMD based denoising (EMD-CIIT \cite{emd_Kopsinis_tsp_2009}) and (b) $\name$.}
	\label{fig:etea2_Example_1_x}
\end{figure}

In the previous section, we have presented $\name$ based on signal model \eqnref{eqn:etea_model}, suitable for Type~1 artifacts, 
where the component $x$ has discontinuous step exponential transients.
Here we illustrate another version of $\name$ based on signal model \eqnref{eqn:etea_model}, where the component $x$ models Type~0 artifacts.

First, we consider the operator 
\begin{equation}
\label{eqn:etea_DDB}
	\RR_2 := 
	\begin{bmatrix}
		r^2		&-2r &1 		&		& 			\\
	   	&r^2 & -2r		&1		& 		&  			\\
		&	&\ddots & 		\ddots &\ddots	&		 \\
		&	&		&r^2		&-2r		&1
	\end{bmatrix},
\end{equation}
where $\RR_2 \in \real ^{(N-2) \times N} $ has three non-zero coefficients in each row. 
As a special case, when $r=1$, $\RR_2$ is the second-order difference operator,
and $\norm{\RR_2 \x}_1$ is the same as the regularizer used in \cite{l1_Kim_2009} for $\ell_1$ detrending.

Taking $\RR_2$ as a filter, the transfer function is
\begin{align}\label{eqn:etea_R2z}
	R_2(z) = ( 1 - r z^{-1})^{2},
\end{align}
which has a double zero at $ z = r $. 
The impulse response of system $R_2^{-1}(z)$ is $ (n+1)r^{n} $, when $n \ge 0$.
It has the same shape as \figref{fig:etea_Example_0_type}a,
which is a suitable model for the Type 0 artifacts in \cite{artifacts_Islam_2014}.
Therefore, $\RR_2 \x$ is sparse when $x$ is composed of such piecewise smooth transients.
Similar to \eqnref{eqn:etea_cost_1}, we formulate the problem
\begin{align}\label{eqn:etea_cost_R2}
	\x\opt = \arg\min_{\x} \Big\{ P_2(\x) = \norm{ \HH ( \y - \x ) }_2^2 + \lam \sum_n \phie ( [\RR_2 \x]_n) \Big\}			
\end{align}	
to estimate the transient component $x$, so that the corrected data is $y-\hat x$,
and the low-pass trend $f$ can be estimated by \eqnref{eqn:etea_f}.
Additionally, \eqnref{eqn:etea_cost_R2} can be easily solved by ETEA (in Table \ref{alg:etea}) substituting $\RR$ by $\RR_2$.

An example of complicated and irregular artifacts is the eye blink/movement artifacts in EEG data, which may have various morphologies and durations.
In this case, we assume that the artifacts can be estimated by a continuous piecewise smooth waveform, 
generated by applying a certain sparse impulse sequence to $R_2^{-1}(z)$ \eqnref{eqn:etea_R2z}.
In other words, we broaden our signal model so that the artifact is not only equivalent to an isolated Type 0 transient as in \cite{artifacts_Islam_2014}, 
but also a superposition of multiple such transients, with some freedom of scaling and shifting.
The problem of estimating $x$ can be solved by \eqnref{eqn:etea_cost_R2} in this case as well. 

\subsection{Example: Simulated data}

\figref{fig:etea2_Example_1_data} shows the simulated data and its lowpass and transient components.
The transient component has several pulses with different heights and widths.
They are obtained by feeding a sequence of impulses into system $R_2^{-1}(z)$.
The filter is given in \eqnref{eqn:etea_ARMA} with $a_k = [1, -2r , r^2 ]$, $b_0 = 1$, and $r = 0.950$.

Empirical mode decomposition (EMD) \cite{Huang_1998_EMD, emd_Huang_2003}, a powerful method for analyzing signals, has been successfully utilized in different fields, 
including neuroscience, biometrics, speech recognition, electrocardiogram (ECG) analysis, and fault detection 
\cite{emd_Chang_2009_iris, emd_Fleureau_2011, emd_Huang_2006_speech, emd_Mandic_2013, Rilling_2008_TSP, emd_Tang_sp_2012, emd_Xie_sp_2014_face, emd_Yan_2014_ECG}. 

As a comparison to the proposed approach, we use the EMD based denoising algorithm in \cite{emd_Kopsinis_tsp_2009},
which uses wavelet coefficient thresholding techniques on decomposed IMFs.
More specifically, we use clear first iterative interval thresholding (CIIT) with smoothly clipped absolute deviation (SCAD) penalty thresholding                     
\cite{emd_Kopsinis_EUSIPCO_2008, emd_Kopsinis_tsp_2009}, with 20 iterations, and the result is shown in \figref{fig:etea2_Example_1_x}b.
In this example, in order to perform decomposition, among the entire eight IMFs,
the thresholding is performed on IMFs 1-5, their summation is considered as the transients illustrated in \figref{fig:etea2_Example_1_x}a,
and IMFs 6-8 are considered as the estimation of the lowpass component.
The EMD based method achieves a decent denoising performance, but does not accurately estimate the components.
For instance, the simulated data has a smooth dip at about $ n = 700$ (circled in \figref{fig:etea2_Example_1_x}a).
EMD decomposes it into higher IMFs since they are varying slowly, which degrades the estimation.
We may group the lowpass and transient components differently to avoid this problem, 
for instance, grouping IMF 1-6 together in order to include more oscillations into transient component,
but this causes other distortion, 
where the decomposed transient component contains a lowpass signal and does not adhere to a baseline of zero.

The result obtained using second-order $\name$ is illustrated in \figref{fig:etea2_Example_1_x}b.
$\name$ estimates both the low-pass and transient components well, 
and recovers the signal by $x+f$ precisely with RMSE = 0.87 (with the smoothed $\ell_1$ penalty function).
The regularization parameter $\lam$ for problem \eqnref{eqn:etea_cost_R2} was similar to \eqnref{eqn:etea_three_sigma_h_1},
\begin{align}\label{eqn:etea_three_sigma_h_2}
	\lam \approx 2.5 \sigma_{w} \norm{ 2\h_2}_2,
\end{align}
where $\h_2$ is the impulse response of system $H^2(z)/ {R_2(z)}$, and $R_2(z)$ is defined in \eqnref{eqn:etea_R2z}.
The decomposition is accurate. 
There are no compensating waveforms between the estimated $x$ and $f$ at $ n = 700 $, comparing to the estimation in \figref{fig:etea2_Example_1_x}a.

Through numerical experiments, we found that second-order ETEA is not very sensitive to parameter $r$.
\figref{fig:etea2_Example_1_x}c shows the RMSE of denoising the data in \figref{fig:etea2_Example_1_data}a, using $ r \in [0.85, 0.99]$.
In this test, all filter parameters ($f_c$ and $d$) are the same and $\lam$ is set by \eqnref{eqn:etea_three_sigma_h_2}.
In most cases, the results are better than EMD-CIIT.
In addition, $r$ should not be too small or very close to 1.
If $r$ has to be very small to yield a good estimation of component $x$,
it must fluctuate extremely rapidly, then it must be closer to a sequence of  sparse spikes (i.e., Type 3 artifacts in \cite{artifacts_Islam_2014}), 
which differs from the signal model.
For such a signal, other algorithms are more suitable, e.g., LPF/CSD \cite{Selesnick_tara_2014, Selesnick_2013_lpftvd}.
Similarly, if $r$ has to be very close to 1 to fit the transients, then the transients must be very close to a piecewise linear signal,
which is also not how we model the signal initially.
As a consequence, we suggest to set $r$ in the range $0.90 < r < 0.98$.

\subsection{Example: ocular artifacts suppression}

\begin{figure}[t]
\centering
	\includegraphics[scale = \figurescale]{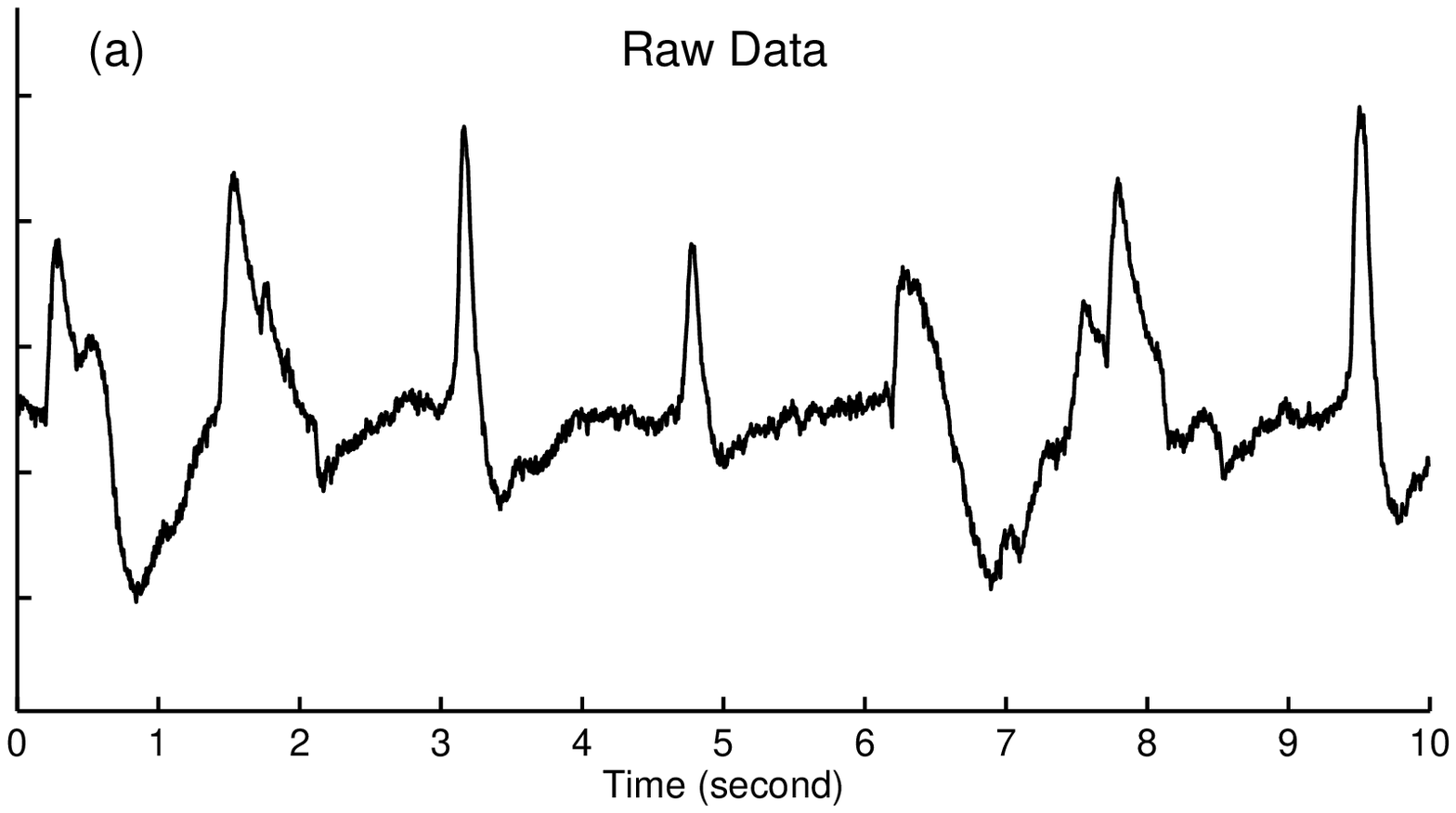} \\
	\includegraphics[scale = \figurescale]{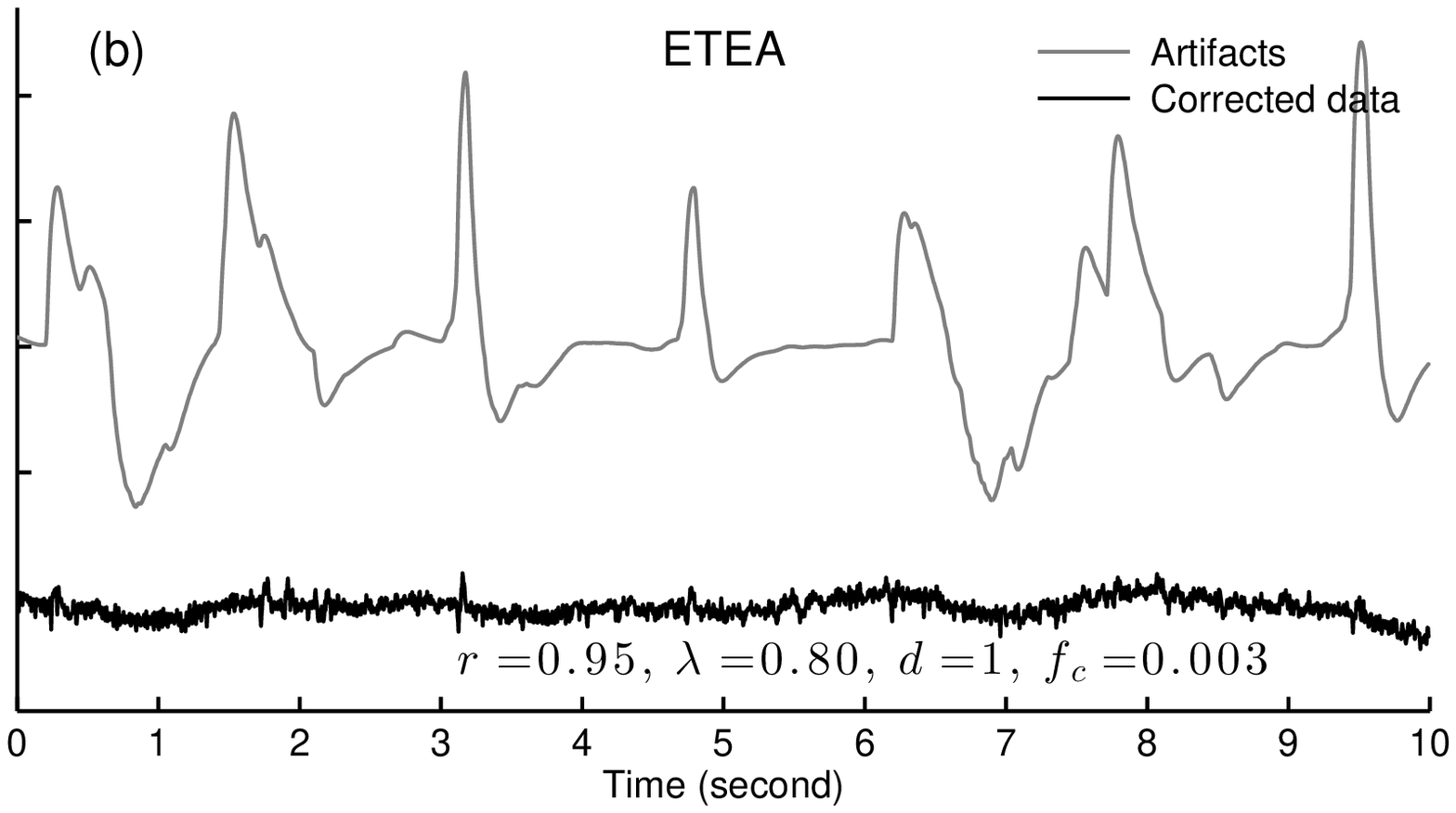} \\
	\includegraphics[scale = \figurescale]{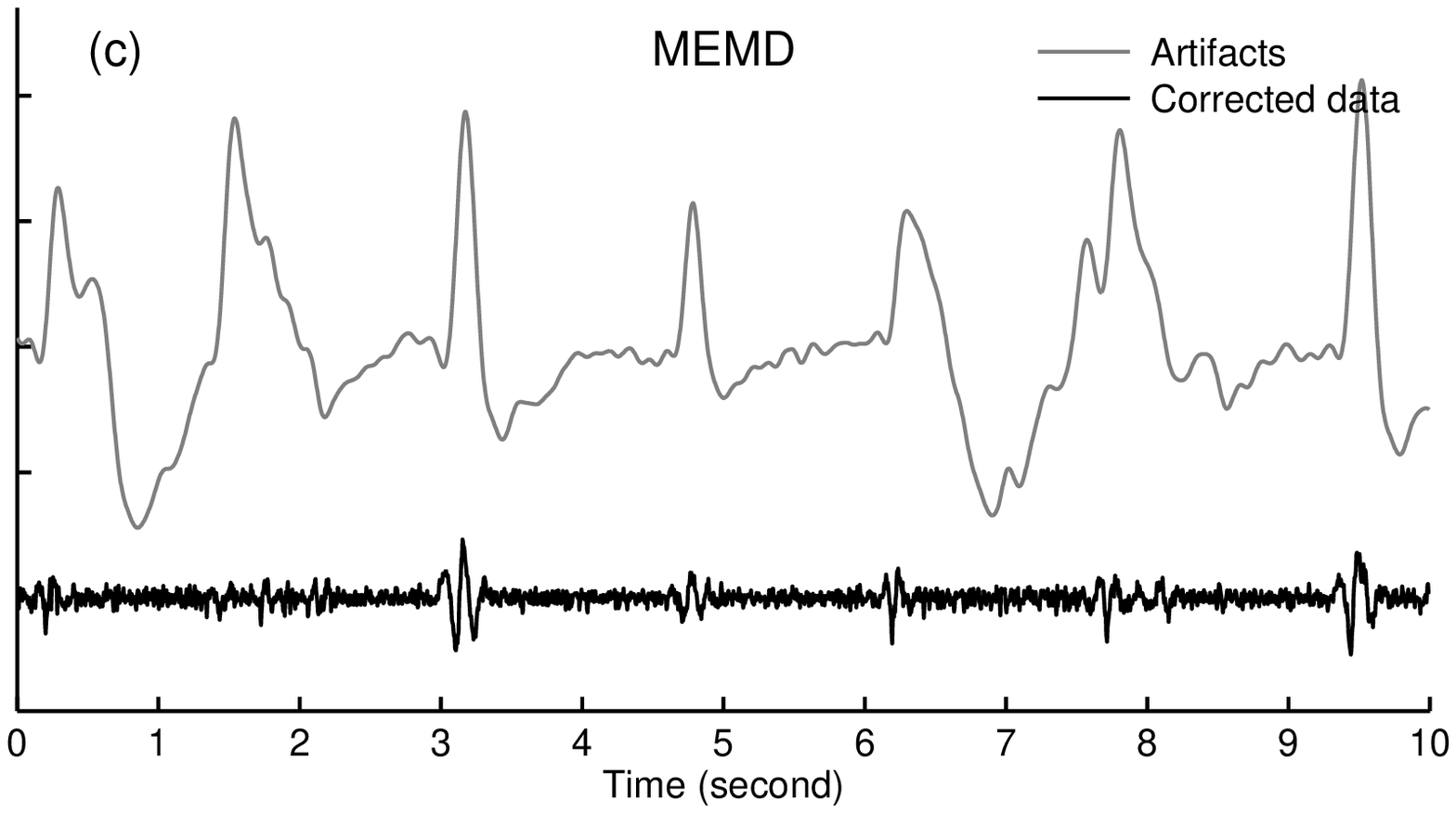}
	\caption{\textbf{Example 4:}~(a) EEG data with ocular artifacts, and corrected data using (b) $\name$ and (c) MEMD.}
	\label{fig:etea_Example_3}
\end{figure}

In this example, we use $\name$ with $\RR_2$ to correct EEG with eye blink/movement artifacts.
\figref{fig:etea_Example_3}a shows a 10 second signal from channel Fp1, with sampling rate $f_s = 256$ Hz, downloaded from \cite{eyeblink_Mandic_data}.
As a channel located on forehead, Fp1 is very sensitive to the motion of eyes and eyebrows,
and in this example, eye movement artifacts with large amplitudes are present through the entire signal.
Applying second-order $\name$ with $r=0.95$,  the results for corrected data and extracted OA are illustrated in \figref{fig:etea_Example_3}b.

As a comparison, we use multivariate empirical mode decomposition (MEMD) \cite{emd_Rehman_2010} to correct the data.
MEMD is a recently developed algorithm extending conventional EMD.
It has been used in different aspects of EEG signal analysis and applications \cite{emd_Chang_tnsre_2013, emd_Park_tnsre_2014, emd_Omidvarnia_2013},
including removing the ocular artifacts (OA) from multichannel EEG data \cite{emd_Rehman_2011, eyeblink_Molla_picassp_2012}.
In this example, we use 4 EEG channels (Fp1, Fp2, C3, C4) measured simultaneously as the input, 
and decompose the higher-index IMFs (low-frequency subbands) considered to be artifacts \cite[Section~V]{emd_Rehman_2011}.
More specifically, among all 14 IMFs decomposed in this example, we use IMF 1-4 as the corrected data, and the rest as artifacts.
The corrected data and estimated OA are shown in \figref{fig:etea_Example_3}c.

From the results in \figref{fig:etea_Example_3}b and \figref{fig:etea_Example_3}c, 
$\name$ estimates artifacts more clearly than MEMD.
In the MEMD result, some small-amplitude higher frequency oscillations leak into the artifact (e.g., about $t = 5.5$ and $9.0$ second).
Moreover, some artifacts are introduced after applying MEMD method to correct the data, (e.g., about $t = 3.0$ and $9.5$ second in \figref{fig:etea_Example_3}c).
Some oscillations are generated as transients where the abrupt artifacts occur.
In contrast, in \figref{fig:etea_Example_3}b,  there are no oscillations introduced either in the estimated artifacts or the corrected data.
\medskip

\noindent\textbf{Computational Efficiency.}
\figref{fig:etea_Example_time} shows the average computation time as a function of the signal length.
In this experiment, we calculate the time of computation of ETEA \eqnref{eqn:etea_cost_1} and second-order ETEA \eqnref{eqn:etea_cost_R2} 
with different filter settings (controlled by parameter $d$),
using input signal lengths from 5000 to $10^5$.
For each length, we average the computation time over 10 trials.
For each trial, run 40 iterations of each algorithm. 
The experiment is implemented in Matlab 8.4 on a MacBook~Pro 2012 with 2.7 GHz CPU.

As shown, the proposed algorithms have a run time of order $O(N)$.
Most of the computation time is consumed by the step of solving the linear system $\QQ^{-1}\b$ in Table~\ref{alg:etea},
where $\QQ$ is a banded matrix and we use a fast banded solver.

Additionally, fast iterative shrinkage-thresholding algorithm (FISTA) \cite{Beck_2009_SIAM, opt_Chambolle_fista_2014},
which is an acceleration scheme for iterative shrinkage-thresholding algorithm (ISTA) \cite{Chambolle_1998_tip}, 
(a special formulation of MM) may be used to further accelerate the algorithm.

\begin{figure}[!t]
\centering
	\includegraphics[scale = \figurescale] {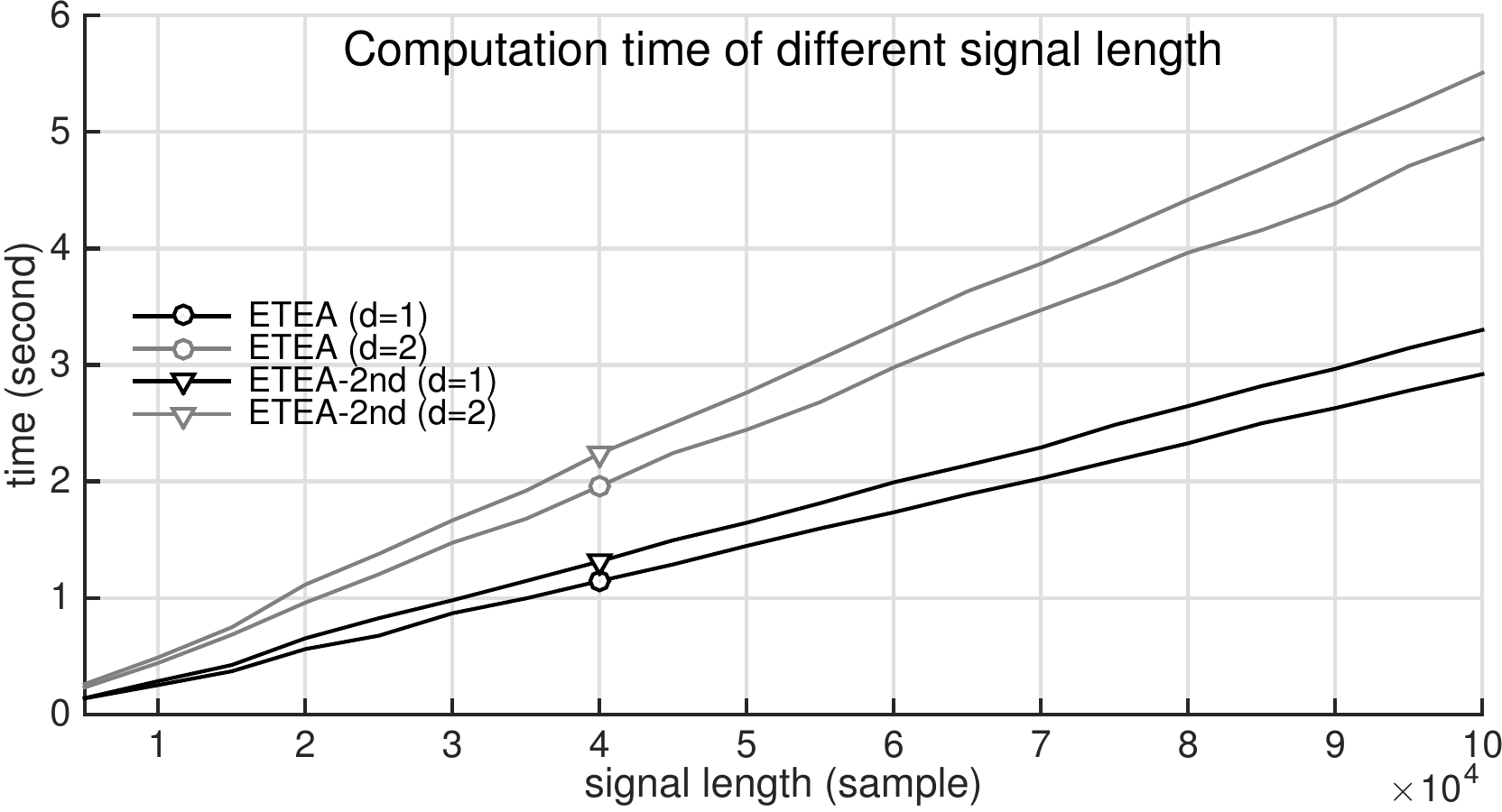} \\
	\caption{Comparison of computation time with different signal length.}
	\label{fig:etea_Example_time}
\end{figure}

\section{Conclusion}
This paper proposes a new algorithm for denoising and artifact removal for signals comprising artifacts arising in measured data, e.g., neural time-series recordings,
using sparse optimization.
The first algorithm, $\name$, assumes the signal is composed of a lowpass signal and an exponential transients (Type 1).
It is formulated as an optimization problem regularized by differentiable and smooth penalty function.
The second algorithm is an extension of $\name$, using a higher-order recursive filter, 
which is applicable for correction of continuous protuberance transients (Type 0), and more irregular artifacts.
As applications, we have shown that $\name$ with different regularizers ($\RR$ and $\RR_2$) 
are suitable for the suppression of Type~1 artifact in ECoG data
and ocular artifacts (OA) (as sequential Type~0 artifacts) in conventional EEG data,
with detailed comparisons to some state-of-the-art methods.
Both of the above algorithms are computationally efficient because they are formulated in terms of banded matrices.
A promising future work is to extend the above data correcting methods to multichannel data.

\section*{Acknowledgments}

The authors would like to thank 
Jonathan Viventi of the Department of Biomedical Engineering of Duke University,
and Justin Blanco of the Electrical and Computer Engineering Department of United States Naval Academy, 
for providing the data and giving useful comments.

\appendix
\section{Proof of Proposition~\ref{pro:etea_pro_1}} \label{app:etea_A}

\begin{proof}
Substitute the variables $u$ and $v$ in \eqnref{eqn:etea_majorizer} by
\begin{align}
	u = \sqrt{x^2+\eps}, \text{ and }  v = \sqrt{z^2+\eps}.
\end{align}
Therefore, for $x, z \in \mathbb{R}$, the inequality holds:
\begin{align} \label{eqn:etea_proof_variable_substitution}
	\frac{\phi'(\sqrt{z^2+\eps})}{2\sqrt{z^2+\eps}}(x^2+\eps) + \phi(\sqrt{z^2+\eps}) - \frac{\sqrt{z^2+\eps}}{2} \phi'(\sqrt{z^2+\eps})
\ge \phi(\sqrt{x^2+\eps}).
\end{align}
The right of the inequality is the majorizer of the smoothed penalty function $\phi(\sqrt{x^2+\eps})$.
\\
1)~If $z \neq 0$, we can multiply $z$ to the nominator and denominator of the first term on the left side of \eqnref{eqn:etea_proof_variable_substitution},
\begin{align} \label{eqn:etea_proof_rewrite_1}
	\frac{\phi'(\sqrt{z^2+\eps})}{2\sqrt{z^2+\eps}}(x^2+\eps) 							
	 = \frac{x^2}{2z}  \bigg( \phi'(\sqrt{z^2+\eps}) \frac{z}{\sqrt{z^2+\eps}} \bigg) 
	 +   \frac{\eps}{2z} \bigg( \phi'(\sqrt{z^2+\eps}) \frac{z}{\sqrt{z^2+\eps}} \bigg) 	
	 = 	\frac{x^2}{2z} \phie'(z) + \frac{\eps}{2z} \phie'(z).
\end{align}

Multiplying the nominator and denominator of the third term on the left side of \eqnref{eqn:etea_proof_variable_substitution} by $\sqrt{z^2+\eps}$, 
\begin{align} \label{eqn:etea_proof_rewrite_2}
	\frac{\sqrt{z^2+\eps}}{2} \phi'(\sqrt{z^2+\eps}) 
	=	& \frac{z^2}  {2\sqrt{z^2+\eps}} \phi'(\sqrt{z^2+\eps}) 		
	  +  \frac{\eps} {2\sqrt{z^2+\eps}} \phi'(\sqrt{z^2+\eps}) 							\nonumber \\[0.4em]
	=  &	\frac{z}{2}  \bigg( \phi'(\sqrt{z^2+\eps}) \frac{z}{\sqrt{z^2+\eps}} \bigg) 
	  +  \frac{\eps}{2z} \bigg( \phi'(\sqrt{z^2+\eps}) \frac{z}{\sqrt{z^2+\eps}} \bigg) \nonumber \\[0.4em]
	=  &\frac{z}{2} \phie'(z) + \frac{\eps}{2z} \phie'(z).
\end{align}
Using the results in \eqnref{eqn:etea_proof_rewrite_1} and \eqnref{eqn:etea_proof_rewrite_2},
the inequality \eqnref{eqn:etea_proof_variable_substitution} can be rewritten as
\begin{align}
	 \frac{x^2}{2z} \phie'(z) + \frac{\eps}{2z} \phie'(z)
	+\phie(z)
	-\frac{z}{2} \phie'(z) - \frac{\eps}{2z} \phie'(z)
	\ge \phie(x),
\end{align}
which can be reorganize into
\begin{align} 	\label{eqn:etea_proof_majorizer_2}
	g_{\eps}(x , z ; \phie) 
	& = \frac{\phi_{\eps}'(z)}{2z}x^2 
	+ \phi_{\eps}(z) - \frac{z}{2} \phi_{\eps}'(z) 
	\ge \phi_{\eps}(x).
\end{align}
\\
2)~If $z = 0$ and $x \neq 0$, by \textit{Lagrange's Mean Value Theorem} \cite[Theorem 5.10]{book_Rudin_1976},
since function $\phi(x)$ is continuous and $\phi''(x) \le 0$ on $x>0$, 
there exists a value $\xi$ in the range $\sqrt{\eps}<\xi<\sqrt{x^2+\eps}$ that
$\phi'(\sqrt{\eps}) \ge \phi'(\xi) \ge \phi'(\sqrt{x^2+\eps})$ satisfying
\begin{align} \label{eqn:etea_Lagrange_mean_3}
	\phi'(\xi) \left(\sqrt{x^2+\eps}-\sqrt{\eps}\right) 
	= \phi(\sqrt{x^2+\eps}) - \phi(\sqrt{\eps}).
\end{align}
Moreover, consider the square that is always positive
\begin{align}
	(\sqrt{x^2+\eps} - \sqrt{\eps}) ^2 = x^2 + \eps - 2  \sqrt{\eps (x^2+\eps)} +\eps > 0,
\end{align}
which implies the inequality
\begin{align} \label{eqn:etea_proof_inequality_1}
	x^2 > 2 \sqrt{\eps (x^2+\eps)} - 2 \eps.
\end{align}
Furthermore, we can multiply both sides of \eqref{eqn:etea_proof_inequality_1} by a positive term 
$\displaystyle \frac{\phi'(\sqrt{\eps}) } {2 \sqrt{\eps} }$,
and then adopt the result from \eqnref{eqn:etea_Lagrange_mean_3}, so that:
\begin{subequations}
\begin{align}
	\frac{\phi'(\sqrt{\eps}) } {2 \sqrt{\eps} }x^2  
	& > 
		\frac{\phi'(\sqrt{\eps}) } {2 \sqrt{\eps} } 
		\left(2 \sqrt{\eps (x^2+\eps)} - 2 \eps\right) \\
	& = \phi'(\sqrt{\eps}) \left(  \sqrt{x^2+\eps} - \sqrt{\eps} \right) \\
	& > \phi'(\xi) \left(  \sqrt{x^2+\eps} - \sqrt{\eps} \right)  \\
	& = \phi(\sqrt{x^2+\eps}) - \phi(\sqrt{\eps}),
\end{align}
\end{subequations}
which leads to
\begin{align}\label{eqn:etea_majorizer_zero}
	  \frac{\phi'(\sqrt{\eps}) } {2 \sqrt{\eps} }x^2 + \phi(\sqrt{\eps}) > \phi(\sqrt{x^2+\eps}) .
\end{align}

Because $\phie$ in \eqnref{eqn:etea_dphie} is differentiable on $\real$, we can find its second-order derivative at zero,
\begin{align}
	\lim_{z\to0} \phie''(z) = \frac{ \phi'(\sqrt{\eps}) }{\sqrt{\eps}},
\end{align}
and by L'H$\hat{\textrm{o}}$pital's rule \cite[Theorem 5.13]{book_Rudin_1976}, we have
\begin{align}
	\lim_{z\to0} \frac{\phie'(z)}{2z} = \lim_{z\to0} \frac{ \phie''(z)}{2} = \frac{ \phi'(\sqrt{\eps}) }{2\sqrt{\eps}},
\end{align}
which implies that \eqnref{eqn:etea_majorizer_zero} is in the same form of the majorizer \eqnref{eqn:etea_proof_majorizer_2} at $z=0$.
\\
3) If $x=z=0$, then the condition \eqnref{eqn:etea_mm_condition} follows immediately.
\end{proof}


\end{document}